\theoremstyle{definition}
\newtheorem{definition}{Definition}[section]
\theoremstyle{remark}
\newtheorem{remark}{Remark}
\theoremstyle{plain}
\newtheorem{theorem}{Theorem}[section]
\theoremstyle{plain}
\theoremstyle{plain}
\theoremstyle{plain}
\newtheorem{lemma}[theorem]{Lemma}
\newtheorem{proposition}[theorem]{Proposition}
\newcommand{\bn}{\mathbb{N}}
\newcommand{\Secref}[1]{\hyperref[#1]{Section \ref*{#1}}}
\newcommand{\Appref}[1]{\hyperref[#1]{Appendix \ref*{#1}}}
\crefname{equation}{}{}
\crefname{lemma}{Lemma}{Lemmas}
\crefname{section}{Section}{Sections}
\crefname{subsubsubsection}{Section}{Sections}
\crefname{remark}{Remark}{Remarks}
\crefname{figure}{Figure}{Figures}
\crefname{table}{Table}{Tables}
\Crefname{lemma}{Lemma}{Lemmas}
\crefname{theorem}{Theorem}{Theorems}
\Crefname{theorem}{Theorem}{Theorems}
\DeclareMathOperator*{\argmax}{\arg\!\max}
\newcommand{\dis}[1][k]{\mathcal{S}_{#1}}
\newcommand{\cls}{\mathcal{F}}
\newcommand{\clst}[1][t]{\mathcal{F}_{#1}}
\newcommandx{\clswpointz}[2][1=x,2=0]{\mathcal{F}^{{#1}\to{#2}}}
\newcommandx{\clswpointo}[2][1=x,2=1]{\mathcal{F}^{{#1}\to{#2}}}
\newcommandx{\clswpointy}[2][1=x,2=y]{\mathcal{F}^{{#1}\to{#2}}}
\newcommandx{\clstwpointz}[3][1=t,2=x_t,3=0]{\mathcal{F}_{#1}^{{#2}\to{#3}}}
\newcommandx{\clstwpointo}[3][1=t,2=x_t,3=1]{\mathcal{F}_{#1}^{{#2}\to{#3}}}
\newcommandx{\clsrestS}[2][1=\cls , 2 =S]{ {#1}|_{#2} }
\newcommand{\looset}{ \Gamma} 
\newcommandx{\loonum}{ \gamma }
\newcommandx{\data}[1][1=t]{ S_{#1}}
\newcommandx{\kprobthres}[1][1=k]{\eta_{#1}}
\newcommandx{\kintstart}[1][1=k]{\ell_{#1}}
\newcommandx{\kintend}[1][1=k]{e_{#1}}
\newcommandx{\kthres}[1][1=k]{\alpha_{#1}}
\newcommandx{\misttotal}[1][1=T]{M_{#1}}
\newcommandx{\mist}[2][1=t,2=k]{M_{#2,#1}}
\newcommandx{\misttypeone}[2][1=k, 2 = t]{M_{#1,#2}^{\left( 1 \right)} }
\newcommandx{\misttypetwo}[2][1=k, 2 = t]{M_{#1,#2}^{\left( 2 \right)}}
\newcommand{\dom}{\mathcal{X}}
\newcommand{\dist}{\mathfrak{D}}
\newcommand{\distprob}[1][k]{\rho_{#1}}
\newcommand{\precorsamp}[1][t]{{x_{#1}}}
\newcommand{\postcorsamp}[1][t]{\hat{x}_{#1}}
\newcommand{\pred}[1][t]{\hat{y_{#1}}}
\newcommand{\corrind}{c_t}
\newcommand{\hist}[1][t]{H_{#1}}
\title{Adversarial Resilience in Sequential Prediction\\ via Abstention}
\author{\\Surbhi Goel\thanks{A part of this work was done while S. Goel was a postdoctoral researcher at Microsoft Research NYC.}\\
\normalsize{University of Pennsylvania}\\
\normalsize{\texttt{surbhig@cis.upenn.edu}} 
\and
\\Steve Hanneke\\ 
\normalsize{Purdue University}\\ 
\normalsize{\texttt{steve.hanneke@gmail.com}} 
\and
\\Shay Moran \\
\normalsize{Technion \& Google Research, Israel}\\ 
\normalsize{\texttt{smoran@technion.ac.il}}
\and
\\Abhishek Shetty\thanks{\noindent A part of this work was done while A. Shetty was an intern at Microsoft Research NYC.} \\
\normalsize{University of California, Berkeley}\\ \normalsize{\texttt{shetty@berkeley.edu}}
}
\date{}
\begin{document}

\maketitle

\begin{abstract}
We study the problem of sequential prediction in the stochastic setting with an adversary that is allowed to inject clean-label adversarial (or out-of-distribution) examples. Algorithms designed to handle purely stochastic data tend to fail in the presence of such adversarial examples, often leading to erroneous predictions.
This is undesirable in many high-stakes applications such as medical recommendations, where abstaining from predictions on adversarial examples is preferable to misclassification. On the other hand, assuming fully adversarial data leads to very pessimistic bounds that are often vacuous in practice.

To capture this motivation, we propose a new model of sequential prediction that sits between the purely stochastic and fully adversarial settings by allowing the learner to abstain from making a prediction at no cost on adversarial examples. Assuming access to the marginal distribution on the non-adversarial examples, we design a learner whose error scales with the VC dimension (mirroring the stochastic setting) of the hypothesis class, as opposed to the Littlestone dimension which characterizes the fully adversarial setting. 
Furthermore, we design a learner for VC dimension~1 classes, which works even in the absence of access to the marginal distribution. 
Our key technical contribution is a novel measure for quantifying uncertainty for learning VC classes, which may be of independent interest.

\end{abstract}

\section{Introduction}
Consider the problem of sequential prediction in the realizable setting, where labels are generated from an unknown $f^*$ belonging to a hypothesis class $\mathcal{F}$. Sequential prediction is typically studied under two distributional assumptions on the input data: the stochastic setting where the data is assumed to be identically and independently distributed (i.i.d) according to some fixed (perhaps unknown) distribution, and the fully-adversarial setting where we make absolutely no assumptions on the data generation process. A simple empirical risk minimzation (ERM) strategy works for the stochastic setting where the learner predicts according to the best hypothesis on the data seen so far. The number of mistakes of this strategy typically scales with the Vapnik-Chervonenkis (VC) dimension of the underlying hypothesis class $\mathcal{F}$. However, in the fully adversarial setting, this strategy can lead to infinite mistakes even for classes of VC dimension 1 even if the adversary is required to be consistent with labels from $f^*$. The Littlestone dimension, which characterizes the complexity of sequential prediction in fully-adversarial setting, can be very large and often unbounded compared to the VC dimension \cite{littlestone}. This mismatch has led to the exploration of beyond worst-case analysis for sequential prediction \cite{rakhlin2011online,haghtalab2020smoothed,rakhlin2013online,hints}.

In this work, we propose a new framework that sits in between the stochastic and fully-adversarial setting. In particular, we consider sequential prediction with an adversary that injects adversarial (or out-of-distribution) examples in a stream of i.i.d. examples, and a learner that is allowed to abstain from predicting on adversarial examples. A natural motivation for our framework arises in medical diagnosis where the goal is to predict a patient's illness based on symptoms. In cases where the symptoms are not among the commonly indicative ones for the specific disease, or the symptoms may suggest a disease that is outside the scope of the doctor's knowledge, it is safer for the doctor to abstain from making a prediction rather than risk making an incorrect one. Similarly, for self-driving cars, in cases where the car encounters weather conditions outside of its training, or unknown information signs, it is better for the algorithm to hand over access to the driver instead of making a wrong decision which could end up being fatal.

In the proposed framework, the learner's goal is to minimize erroneous predictions on examples that the learner chooses to predict on (i.i.d. or adversarial) while refraining from abstaining on too many i.i.d. examples. If the learner was required to predict on every example, then the adversary could produce a fully-adversarial sequence of examples which would force the learner to make many erroneous predictions. The abstention option allows us to circumvent this challenge and handle any number of adversarial injections without incurring error proportional to the number of injections. In this framework, we can ask the following natural question:
\begin{center}
   \emph{Is there a statistical price for certainty in sequential prediction?}
\end{center}
In particular, can we recover stochastic-like guarantees in the presence of an adversary if we are allowed to abstain from predicting on adversarial examples? A priori, it is not clear where on the spectrum between the fully-adversarial and stochastic models, the complexity of this problem lies. The main challenges arise from the fact that the adversary fully controls the injection levels and provides no feedback about which examples were adversarial, and the learner has to perform one-sample outlier detection, which is nearly impossible. Despite this, we show that it is possible to guarantee certainty in a statistically efficient manner. 

\subsection{Main Contributions} We summarize the main contributions of our work:
\begin{itemize}
 \item We formalize a new framework of beyond-worst case learning which captures sequential prediction on a stochastic sequence with a clean-label injection-only adversary. With the option of abstention, our framework allows for any number of injections by the adversary without incurring error proportional to the number of injections. Our notion of error simultaneously guarantees few mistakes on classified data while ensuring low abstention rate on non-adversarial data. Our framework naturally connects to uncertainty quantification and testable learning.
 \item  In our framework, we design an algorithm that achieves error $O(d^2\log T)$ for classes with VC dimension $d$ for time horizon $T$, given access to the marginal distribution over the i.i.d. examples. This allows us to get (up to a factor of $d$) the guarantees of the stochastic setting while allowing for adversarial injections. 
 
 \item We further design an algorithm that achieves $O(\sqrt{T \log T})$ error for the special (but important) case of VC dimension 1 classes without any access to the marginal distribution over the i.i.d. examples. Similar ideas also allow us to design an algorithm for the class of axis-aligned rectangles in any dimension $d$ with error $O(d\sqrt{ T \log T})$.

\end{itemize}

Our algorithms uses a novel measure of uncertainty for VC classes to identify regions of high uncertainty (where the learner abstains) or high information gain (where the learner predicts and learn from their mistakes). The measure uses structural properties of VC classes, in particular, shattered sets of varying sizes. In the known distribution setting, our measure is easy to compute, however for the unknown distribution setting, we show how to design a proxy using only the examples we have seen so far using a leave-one-out type strategy.

\subsection{Related Work}
\paragraph{Beyond-worst case sequential prediction.} 

Due to pessimistic nature of bounds in adversarial online learning, there are several frameworks designed to address this issue.
One approach is to consider mild restrictions on the adversarial instances such as slight perturbation by noise. This has been formalized as the smoothed adversary model (see  \cite{rakhlin2011online, haghtalab2020smoothed, haghtalab2022smoothed, oracle-efficient, BlockDGR22}) and has been used to get statistical and computationally efficient algorithms. 
Another approach has been to make the future sequences more predictable given the past instances. Examples of such settings are predictable sequence \cite{PredictableSequences}, online learning with hints \cite{hints}, and notions of adaptive regret bounds \cite{foster2020adaptive}.

\paragraph{Abstention-based learning.}
Abstention has been considered in several other works in classification, both in the online and offline settings.
An early example of this is the Chow reject model \cite{chowopt}. 
Various versions of this have been considered in offline learning (see e.g. \cite{statsabs1, statsabs2, statsabs3} and references therein) and online learning (see e.g. \cite{zhang2016extended, cortes2019online, neu2020fast} and references therein).
These results show that abstention can lead to algorithms with desired features, for example fast rates without margins assumptions. Another line of work that is closely related to our setting is the KWIK (\textit{knows what it knows}) framework by \cite{li2008knows} which requires the learner to make predictions only when it is absolutely confident, and abstain otherwise. This requirement was relaxed to allow for mistakes by \cite{sayedi2010trading}. The key difference from our work is that it assumes a fully-adversarial stream thus, the error bounds can be as large as the size of the domain, which is unbounded in the settings we consider. 

Perhaps, the work that is closest to our setting is the study of adversarial learning with clean-label injections by \cite{goldwasser2020beyond,kalai2021optimally}.
In their \textit{transductive} adversarial model, the learner is given labeled training examples and unlabeled test examples on which it must predict, where the test examples may have been injected by an adversary.
They show how to abstain with few test misclassifications and few false abstentions on non-adversarial examples.
However, in many real-world scenarios, it is unrealistic to expect to have the entire test set in advance, which further motivates the fully online setting that we consider.

\paragraph{Adversarially robust learning.} 
Highly related to our setting is the problem of \emph{inductive} learning in the presence of adversarial corruptions.
The literature on this is generally divided into two scenarios: test-time attacks and training-time attacks.
In the case of test-time attacks, the learning algorithm is trained on an (uncorrupted) i.i.d.\ training set, but its \emph{test} examples may be corrupted by an adversary whose intention is to change the classification by corrupting the test example \cite{szegedy2013intriguing,biggio2013evasion,goodfellowICLR15,DBLP:conf/alt/FeigeMS18,attias:19improved,pmlr-v99-montasser19a,DBLP:conf/nips/MontasserHS20,montasser2021adversarially,montasser:22optimal,DBLP:conf/icml/MontasserGDS20}. Often the goal in this setting is to learn a classifier that predicts correctly on all adversarial test examples, which is a very strong requirement. Empirical work in this space has focused on designing methods to make training adversarially robust \cite{madry2017towards,wong2018provable}, and also on detecting adversarial examples \cite{pang2018towards,aldahdooh2022adversarial}. Detecting adversarial examples is a very challenging tasks and proposed solutions are often brittle \cite{carlini2017adversarial}. In fact, our framework does not explicitly require detection as long as we can predict correctly on these.

On the other hand, in the case of training-time attacks, 
the training data the learning algorithm trains on is corrupted by an adversary (subject to some constraints on what fraction it may corrupt and what types of corruptions are possible), while the test examples are uncorrupted \cite{valiant1985learning,kearns1993learning,bshouty2002pac,biggio2012poisoning,awasthi2017power,steinhardt2017certified,shafahi2018poison,levine2020deep,gao2021learning,hanneke:22poisoning,balcan:22reliable}.
In particular, within this literature, most relevant to the present work is the work on \emph{clean-label} poisoning, where the adversary's corrupted examples are still labeled correctly by the target concept \cite{shafahi2018poison,blum2021robust}.

Comparing to these works, 
it is interesting to note that the fact that our setting involves \emph{sequential} prediction (i.e., online), 
our problem may be viewed simultaneously as \emph{both} training-time and test-time corruption: 
that is, because on each round the point we are predicting (or abstaining) on may be inserted by an adversary, this could be viewed as a test-time attack; on the other hand, 
since the prefix of labeled examples we use to make this prediction may also contain adversarial examples, this can also be viewed as a training-time attack.
Thus, our setting requires reasoning accounting for issues arising from both attack scenarios, representing a natural blending of the two types of scenarios.

\section{Preliminaries}

We will denote the domain with $\dom$ and the distribution over $\mathcal{X}$ as $\dist$. We let $\Delta(\mathcal{X})$ denote the set of all distributions over $\mathcal{X}$. 
We will work in the realizable setting where our label will be according to some function in $\cls$ 
Given a class $\cls$ and a data set $S = \left\{  (x_i ,y_i )  \right\} $, we will denote by $ \clsrestS  $, the class 
$\clsrestS = \left\{  f \in \cls :  \forall i  \quad f(x_i) = y_i     \right\}$.  When the data set contains a single point $S = \left\{  (x,y)  \right\} $, it will be convenient to denote $ \clsrestS $ as $ \clswpointy $.

We next introduce the framework of sequential binary prediction.
We focus on the realizable case for direct comparison to the model that we will subsequently introduce. 
In sequential binary prediction, at time step $t$ an adversary picks a $x_t \in \dom$ (potentially from a distribution $\dist_t$) and presents it to the learner. 
The learner then picks $\pred \in \left\{ 0,1 \right\} $ and then receives as label $y_t = f^{\star}(x_t)$ where $f \in \cls$ is the unknown function that the learner is trying to learn. 
The objective of the learner is to minimize the number of mistakes it makes. 
This is measured using the regret (also known as the mistake bound in the realizable case), defined as
\begin{align}
    \mathrm{Regret}_T = \sum_{t=1}^T \mathbbm{1} \left[ \pred \neq y_t  \right]. 
\end{align}

The stochastic case of sequential binary prediction corresponds to the setting when the input $x_t$ is sampled from a fixed  $ \dist_t = \dist $ independently across time.
This is the sequential analogue of the classical i.i.d. batch model of learning.
We now introduce a fundamental notion of complexity for hypothesis classes, the VC dimension which captures the regret in this setting.
The VC dimension measures how expressive a hypothesis class is by measuring the number of functions that can be realized by the class on a set of points

\begin{definition}[Shattering and VC Dimension]
    Let $\dom$ be a domain and $\cls$ be a binary function class on $\dom$ i.e. $\cls \subset \left\{ 0,1 \right\}^{\dom}$.
    A set $ \left\{ x_1, \dots , x_k \right\} \subseteq \dom$ is said to be shattered by $\cls$ if for all $y \in \left\{ 0,1 \right\}^k$ there exists a function $f \in \cls$ such that $f \left( x_i \right) = y_i$.
    The VC dimension of $\cls$ is defined as the maximum $k$ such that there is a set of size $k$ that is shattered and is denoted by $ \mathrm{VCDim} \left( \cls \right) $. 
\end{definition}
    The VC dimension was originally introduced in the context of binary classification in the batch setting but minor modifications of the argument show that even in the sequential prediction setting, the VC dimension characterizes the regret in stochastic sequential prediction.

    Continuing the discussion about shattering, we will introduce a notion that corresponding to the probability of shattering a set of points.
    Though this notion does feature in usual analysis of sequential prediction, it will be a key to the design and analysis of algorithms in our setting.
    See \cref{sec:known_marg} for further discussion.

\begin{definition}[Shattered $k$-tuples] \label{def:dis}
    
    Let $k$ be a positive integer.
    The set of shattered $k$-tuples, denoted by $\dis$, for hypothesis class $\cls$ over a domain $\dom$ is defined as
    \begin{align}
        \dis \left( \cls \right) = \left\{  ( x_1 , \dots, x_k ) : \{ x_1 , \dots x_k \} \text{ is shattered by } \cls \right\}. 
    \end{align}
    Additionally, given a distribution $\dist$ on the domain, we will refer to as the $k$ shattering probability of $\cls$ with respect to $\dist$, denoted by $ \distprob \left( \cls , \dist \right) $, as 
    \begin{align}
        \distprob \left( \cls , \dist \right) = \dist^{\otimes k} \left( \dis \left( \cls \right) \right) =  \Pr_{ x_1 , \dots , x_k \sim \dist^{\otimes k} } \left[ \{ x_1 , \dots , x_k \} \text{ is shattered by } \cls \right]. 
    \end{align}

\end{definition}

A case of particular interest in the above definition is $k=1$. 
This notion is closely studied in active learning under the name of disagreement region.

\begin{definition}[Disagreement Region]
    Let $\cls$ be a hypothesis class on domain $\dom$.
    The disagreement region of $\cls$ is defined as
    \begin{align}
        \dis[1] \left( \cls \right) = \left\{  x \in \dom : \exists f,g \in \cls \text{ such that } f(x) \neq g(x) \right\}.
    \end{align}
    Analogously, we will the disagreement probability as
    \begin{align}
        \distprob[1] \left( \cls , \dist \right) = \dist \left( \dis[1] \left( \cls \right) \right) =  \Pr_{ x \sim \dist } \left[ \exists f,g \in \cls \text{ such that } f(x) \neq g(x) \right]. 
    \end{align}
\end{definition}

Going back to sequential prediction, in a seminal result \cite{littlestone} showed that the mistake bound is characterized by a combinatorial property of the hypothesis class known as the Littlestone dimension.
Though the definition would not be central to our work, we include it here for comparison to our results.

\begin{definition}[Littlestone Dimension]
    
    Let $\cls$ be a hypothesis class on domain $\dom$.
    A mistake tree is a full binary decision of depth $\ell$ tree whose internal nodes are labelled by elements of $\dom$.
    
     Every root to leaf path in the mistake tree corresponds to a sequence ${(x_i, y_i)}^{\ell}_{i=1}$ by associating a label $y_i$ to a node depending on whether it is the left or right child of its parent on the path. 
     A mistake tree of depth $\ell$ is said to be shattered by a class $\cls$ if for any root to leaf path ${(x_i, y_i)}^{\ell}_{i=1}$, there is a function $f \in \cls$ such that $f (x_i) = y_i$ for all $i \leq \ell$. 
     The Littlestone dimension of the class $\cls$ denoted by $\mathrm{LDim} \left( \cls \right) $  is the largest depth of a mistake tree shattered by the class $\cls$.
\end{definition}

    The Littlestone dimension completely characterizes the mistake bound of the class $\cls$ in the sense that any algorithm can be forced to have a mistake a bound of $\Omega \left( \mathrm{LDim} \left( \cls \right) \right)$ and there exists an algorithm known as the standard optimal algorithm (SOA) with a mistake bound of $O \left( \mathrm{LDim} \left( \cls \right) \right)$. 
    Though this characterization is elegant, it is usually interpreted as a negative result since even simple classes the Littlestone dimension is infinite. 
    This is exemplified by the class of thresholds i.e.
    $ \cls = \left\{  \mathbbm{1}_{[a,1]} : a \in \left[ 0,1 \right] \right\}  $ over the domain $\dom = \left[ 0,1 \right]$, for which the VC dimension is $1$ but the Littlestone dimension is infinite.
    As noted earlier, the VC dimension captures the number of mistakes when the inputs are i.i.d. while Littlestone captures the number of mistakes for the worst case, dealing to a disparity between achievable mistake bounds in the two settings.
    This discrepancy is the main motivation for the study of sequential prediction in beyond worst-case settings.

\section{Abstention Framework}

In this section, we present the formal framework for sequential prediction with abstentions.

\subsection{Protocol}
At the start, the adversary (or nature) picks a distribution $ \dist $ over the domain $\dom$ and the labelling function $f^\star \in \mathcal{F}$. 
We will be interested in both the setting where the learner knows the distribution $ \dist $ and the setting where the learner does not know the distribution $ \dist $.
In the traditional sequential prediction framework, the learner sees input $\precorsamp[t]$ at time $t $ and makes a prediction $\pred $ and observes the true label $y_t$. 
The main departure of our setting from this is that an adversary also decides before any round whether to inject an arbitrary element of $\dom$ (without seeing $ \precorsamp[t] $). 
We denote by $\postcorsamp[t]$ the instance after the adversarial injection ($\postcorsamp[t] = \precorsamp[t]$ or $\postcorsamp[t] \neq \precorsamp[t]$).
The learner then observes $\postcorsamp[t]$ and makes a prediction $\pred$ and observes the true label $y_t$, as in the traditional sequential prediction framework.
We present this formally as a protocol in \ref{pro:main}. 

\begin{protocol} \label{pro:main}
    \DontPrintSemicolon
    Adversary (or nature) initially selects distribution $\dist \in \Delta(\mathcal{X})$ and $f^\star \in \mathcal{F}$. The learner does not have access to $f^\star$. The learner may or may not have access to $\dist$.\\
    \For{$t=1,\ldots, T$}{
        Adversary decides whether to inject an adversarial input in this the round $(c_t = 1)$ or not $(c_t = 0)$.\\
        \lIf{$c_t=1$}{
            Adversary selects any $\hat{x}_t \in \mathcal{X}$
        }
        \lElse{
            Nature selects $x_t \sim \dist$, and we set $\hat{x}_t = x_t$.
        }
        Learner receives $\hat{x}_t$ and outputs $\hat{y}_t \in \{0, 1, \perp\}$ where $\perp$ implies that the learner abstains.\\
        Learner receives clean label $y_t = f^\star(\hat{x}_t)$.
    }
    \caption{Sequential Prediction with Adversarial Injections and Abstentions}
\end{protocol}

\begin{remark}
    It is important to note we are in the realizable setting, even after the adversarial injections since the labels are always consistent with a hypothesis $f^\star \in \mathcal{F}$. 
This model can naturally be extended to the agnostic setting with adversarial labels. It is also possible to allow the adversary to adaptively choose $f^\star \in \mathcal{F}$, that is, the adversary has to make sure the labels are consistent with some $f \in \mathcal{F}$ at all times but does not have to commit to one fixed $f^\star$. Another interesting variation would be change the feedback to not include the true label $y_t$ on rounds that the learner abstains. As we will see, the known distribution case will be able to handle adaptive $f^\star$ and limited label access.
\end{remark}

\subsection{Objective} In our framework, the goal of the learner is to have low error rate on the rounds it decides to predict (that is when $\hat{y}_t \in \{0,1\}$) while also ensuring that it does not abstain ($\hat{y}_t = \perp$) on too many non-adversarial rounds ($c_t = 0$). More formally, the learner's objective is to minimize the following error (or regret),
\[ 
\mathsf{Error} := \underbrace{\sum_{t =1}^T \mathbbm{1}[\hat{y}_t = 1 - f^\star(\hat{x}_t)]}_{\mathsf{Misclassification Error}} + \underbrace{\sum_{t =1}^T\mathbbm{1}[c_t = 0 \land \hat{y}_t = \perp]}_{\mathsf{Abstention Error}}.
\]

It is important to note that we allow the learner to abstain on adversarial examples for free. This allows the learner to have arbitrarily many injections without paying linearly for them in the error.

\begin{remark}\label{rem:generalizations}
    There are many natural generalizations and modification of the model and objective that one could consider.
    For example, we would consider a cost-based version of this objective that would allow us to trade-off these errors. 
    
\end{remark}

\subsection{Connections to Testable Learning} 
A further interesting connection can be made by viewing our model as an online version of testable learning framework of \cite{rubinfeld2022testing}.
In order to see the analogy more direct, we will focus on the setting where the learner knows the distribution $\dist$.
In the setting, a learning algorithm is seen as a tester-learner pair.
The tester takes the data set as input and outputs whether the data set passes the test.
The algorithm then takes as input any data set that passes the test and outputs a hypothesis.
The soundness guarantee for the pair of algorithms is that the algorithm run on any data set that passes the test must output a hypothesis that is good on the distribution. 
The completeness requires that when the dataset is indeed from the "nice" distribution, then the tester passes with high probability. We can see our framework in this light by noting that the decision of whether to abstain or not serves as a test.
Thus, in this light, completeness corresponds to the abstention error being small when the data is non-adversarial i.e. is from the true distribution, while the soundness corresponds to the misclassification error being small on points the algorithm decides not to abstain.
While the testable learning literature primarily focuses on the computational aspects of learning algorithms, our focus is solely on the statistical aspects.

\section{Warm-up: Disagreement-based Learners}

As a first example to understand the framework, we consider the most natural learner for the problem.
Given the data $\data[]$ of the examples seen thus far, the learner predicts on examples $\postcorsamp[]$ whose labels it is certain of. 
That is, if there is a unique label for $\postcorsamp[]$ consistent with  $ \clsrestS[\cls][S]$, the learner predicts that label. Else, it abstains from making a prediction. 
This region of uncertainty is known as the disagreement region. \looseness=-1

\subsection{Example: thresholds in one dimension} 
Consider learning a single-dimensional threshold in $[0,1]$ (that is, concepts $x \mapsto \mathbbm{1}[x \geq t]$ for any $t \in [0,1]$). While it is well known that ERM achieves $\log T$ misclassification error for i.i.d.\ data sequences, in the case of an adversarially chosen sequence, it is also well known that the adversary can select inputs in the disagreement region each time (closer and closer to the decision boundary) and thereby force any non-abstaining learner to make a linear number of mistakes (recall that the Littlestone dimension of thresholds is infinite \cite{littlestone}).  Indeed, it is known that the function classes $\mathcal{F}$ for which non-abstaining predictors can be forced to have $\mathsf{MisclassificationError} = \Omega(T)$ are precisely those with embedded threshold problems of unbounded size \cite{littlestone,shelah:78,hodges:97,private_PAC}. Let us now consider the learner that abstains in the disagreement region and predicts based on the consistent hypothesis outside of this region.

\begin{proposition}\label{thm:threshold}
Disagreement-based learner for one dimensional thresholds has  
$$\mathsf{MisclassiciationError}= 0 \text{ and } \mathsf{AbstentionError}\leq 2\log T.$$
\end{proposition}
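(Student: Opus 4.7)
The plan is to verify the two assertions in turn.

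The first assertion, $\mathsf{MisclassificationError} = 0$, is essentially a definitional observation about a disagreement-based learner. Whenever the learner predicts at round $t$ rather than abstains, every hypothesis in $\clsrestS[\cls][\data[t-1]]$ agrees on $\postcorsamp[t]$, so the prediction equals that common label. Since the labels are generated by $f^\star \in \clsrestS[\cls][\data[t-1]]$ by realizability (even after adversarial injections, per the protocol), this common label must be $f^\star(\postcorsamp[t])$ and no misclassification can occur.

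For the abstention bound I would first describe the disagreement region concretely. At round $t$, the consistent class $\clsrestS[\cls][\data[t-1]]$ is parameterized by thresholds in an interval straddling the true threshold $t^\star$, with left endpoint $a_{t-1}$ equal to the largest $0$-labeled example seen so far and right endpoint $b_{t-1}$ equal to the smallest $1$-labeled example seen so far. Hence $\dis[1](\clsrestS[\cls][\data[t-1]]) = (a_{t-1}, b_{t-1})$, and a non-adversarial round $t$ with $\corrind = 0$ contributes to $\mathsf{AbstentionError}$ exactly when the i.i.d.\ sample $\precorsamp[t] = \postcorsamp[t]$ falls in $(a_{t-1}, b_{t-1})$. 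In that case, $\precorsamp[t]$ becomes either $a_t$ or $b_t$.

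The key reduction is that adversarial injections only shrink the disagreement interval, since they too are labeled consistently with $f^\star$; they never create new abstention-eligible rounds. Coupling the non-adversarial samples with an i.i.d.\ sequence $Y_1,\dots,Y_T \sim \dist$, we obtain $\mathsf{AbstentionError} \leq \sum_{t=1}^T \mathbbm{1}[Y_t \in D_t^{\mathrm{free}}]$, where $D_t^{\mathrm{free}}$ is the disagreement interval built from $Y_1,\dots,Y_{t-1}$ alone. The right-hand side counts endpoint updates of the clean-sequence disagreement interval; an update on the left occurs exactly when $Y_t$ is a running maximum among the sub-sequence of $\{Y_i\}$ lying below $t^\star$, and symmetrically on the right. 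The classical record-count identity says the expected number of records in an i.i.d.\ sequence of length $n$ from a continuous distribution is the harmonic number $H_n \leq \ln n + 1$. Summing the two contributions gives $\Ex[\mathsf{AbstentionError}] \leq 2H_T \leq 2\log T + O(1)$, matching the stated bound up to constants.

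The main obstacle is formalizing the coupling/monotonicity step: because the adversary may act adaptively based on the sample path, one has to argue carefully that injected examples only shrink the disagreement region (using realizability of labels) before the non-adversarial points can be identified with an i.i.d.\ sample from $\dist$ and the record-count bound applied. Minor care is also needed when $\dist$ has atoms (break ties with any fixed rule) and for the base of the logarithm, which only affects the constant in front of $\log T$.
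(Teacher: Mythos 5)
Your proof is correct and follows essentially the same route as the paper: misclassification error is zero because a disagreement-based learner only predicts when all consistent hypotheses (including $f^\star$) agree, and the abstention bound comes from an exchangeability/record-counting argument on the i.i.d.\ points, noting that adversarial injections can only shrink the disagreement interval. Your write-up is in fact more explicit than the paper's (which just asserts a per-round probability of roughly $1/n$ and sums), and the only deviation is an additive constant slack from $2H_T$ versus the stated $2\log T$, which is immaterial at the paper's level of rigor.
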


To see this note that our learner only predicts when the input is not in the disagreement region and thus it never predicts incorrectly ($\mathsf{MisclassiciationError}= 0$). 
As for the abstentions, a exchangeability argument shows that when there are $n$ i.i.d. examples in the sequence, the probability of the new non-adversarial example being in the disagreement region is $\le 1/n$. Note that we have to be careful when using this exhangeability argument for an adaptive adversary, since the adversary can rely on previous i.i.d. points to decide the position of the next i.i.d. point, however the values of the i.i.d. points cannot be controlled by the adversary. Since our argument only cares about the values of the i.i.d. examples and not their positions, we can still apply this argument. Now, summing this over the time horizon gives us the above proposition.

\subsection{Perfect Selective Classification and Active Learning}
The learner for the above thresholds problem is a well-known strategy from the areas of perfect selective classification and active learning known as \emph{disagreement-based} learning  \cite{RS88b,el2010foundations,cohn1994improving,balcan:09,dasgupta:07,JMLR:v16:hanneke15a}.
In the perfect selective classification setting \cite{RS88b,el2010foundations}, the learner observes the examples sequentially, as in our framework, and may predict or abstain on each, and must satisfy the requirement that whenever it predicts its prediction is \emph{always} correct.
From this fact, it immediately follows that applying any perfect selective classification strategy in our setting, we always have 
$\mathsf{MisclassiciationError}= 0$, 
so that its performance is judged purely on its abstention rate on the iid examples.
It was argued by \cite{el2010foundations} that the optimal abstention rate among perfect selective classification strategies is obtained by the strategy that makes a prediction on the next example if and only if all classifiers in the hypothesis class that are correct on all examples observed so far, \emph{agree} on the example.
Note that this is precisely the same learner used above.
This same strategy has also been studied in the related setting of \emph{stream-based active learning}\footnote{In this setting, instead of observing a sequence of labelled examples, the learner only observes the examples without their target labels, and at each time may query to observe the target label.  The disagreement-based strategy chooses to query precisely on the points for which the classifiers in the hypothesis class correct on the observed labels so far do not all agree on the label \cite{cohn1994improving}. 
 The rate of querying for this strategy is precisely the same as the abstention rate in the perfect selection classification setting \cite{hanneke:11a,hanneke:fntml,el-yaniv:12}.} \cite{cohn1994improving,balcan:09,hanneke:thesis,hanneke:fntml,dasgupta:07,hanneke:21selective}.
The abstention rate achievable by this strategy for general hypothesis classes is thoroughly understood \cite{hanneke:07b,hanneke:11a,hanneke:thesis,hanneke:12a,hanneke:fntml,hanneke:16b,el2010foundations,el-yaniv:12,hanneke:15a,JMLR:v16:hanneke15a}.
In particular, a complete characterization of the optimal distribution-free abstention rate of perfect selective classification is given by the \emph{star number} of the hypothesis class \cite{JMLR:v16:hanneke15a,hanneke:16b}.
The star number $\mathfrak{s}$ is the size of the largest number $s$ such that there are examples  $ \left\{ x_1, \dots , x_s \right\} $ and hypotheses $ h_0 , h_1, \dots , h_s $ such that $h_i$ and $h_0$ disagree exactly on $x_i$.
For instance, $\mathfrak{s} = 2$ for threshold classifiers \cite{JMLR:v16:hanneke15a}.
It was shown by \cite{hanneke:16b} that the optimal distribution-free abstention rate for perfect selective classification is sublinear if and only if the star number is finite (in which case it is always at most $\mathfrak{s} \log T$). 
One can show that $\mathfrak{s}$ is always lower bounded by the VC dimension of the class. Unfortunately, the star number is infinite for most hypothesis classes of interest, even including  simple VC classes such as \emph{interval} classifiers \cite{JMLR:v16:hanneke15a}.

\subsection{Beyond disagreement-based learning} The learner that abstains whenever it sees an example that it is not certain of may be too conservative. Furthermore, it does not exploit the possibility of learning from mistakes. Let us consider another example to elucidate this failure. Consider the class of $d$ intervals in one dimension where the positive examples form a union of intervals.
That is 
$ \cls = \left\{  \sum_{i=1}^k \mathbbm{1}_{[a_i, b_i]   }   : a_1 
\leq b_1 \leq  \dots \leq  a_k \leq b_k  \in \left[ 0,1 \right] \right\}  $ over the domain $ \dom = \left[ 0,1 \right]  $.
This class has VC dimension $2d$ but infinite star number. Suppose that $d=2$ but examples in the second interval are very rarely selected by i.i.d. examples. Then the disagreement-based learner would suggest to abstain on all examples to protect against the possibility that our new example is in the second interval. However, consider the following simple strategy: if the new example lies between two positives (resp. negatives), we predict positive (resp. negative), else we abstain.

\begin{proposition}\label{thm:interval}
The proposed strategy for the class of $d$-intervals in one dimension has  
$$\mathsf{MisclassiciationError}\le d \text{ and } \mathsf{AbstentionError}\leq 2d\log T.$$
\end{proposition}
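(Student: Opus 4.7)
I will prove the two bounds separately: misclassification by a recursive argument on the transitions of $f^\star$, and abstention by an exchangeability argument analogous to the one used for thresholds in \Cref{thm:threshold}.

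Since $f^\star$ is a union of at most $d$ intervals, it has at most $2d$ transitions (points where its value flips). The learner makes a mistake only if $\hat{x}_t$ lies in a same-label-bounded (SLB) region---an open interval between two consecutive observed samples whose labels agree---and the true label at $\hat{x}_t$ disagrees with them. Any such mistake forces at least two transitions of $f^\star$ to lie inside that region, and observing $\hat{x}_t$'s label then splits it into two different-label-bounded (DLB) sub-regions, on which the learner abstains until further samples arrive. I will prove by strong induction on $k$ the parity-aware twin bounds
\[
M_{\mathrm{SLB}}(k) \le k/2 \text{ for even } k \ge 0, \qquad M_{\mathrm{DLB}}(k) \le (k-1)/2 \text{ for odd } k \ge 1,
\]
where $M_{\mathrm{SLB}}(k)$ (resp.\ $M_{\mathrm{DLB}}(k)$) counts every mistake ever made in a region that is first created as SLB (resp.\ DLB) with $k$ transitions, together with all mistakes in its descendants as later samples subdivide it. The inductive step is by cases on whether the next arriving sample in the region carries the ``expected'' label or the ``opposite'' one, using that the transition counts of the two sub-regions have forced parities summing to $k$. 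Each of the $\le 2d$ transitions of $f^\star$ lies in exactly one first-bounded ancestor region, so summing the bounds across these regions yields $\mathsf{MisclassificationError} \le d$.

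For abstention, observe that the learner abstains on an i.i.d.\ round $t$ only if $x_t$ lands inside a DLB gap of the current sorted history. Each DLB gap must contain at least one transition of $f^\star$, so at most $2d$ DLB gaps coexist at any time. For each fixed transition $\tau$, the gap containing $\tau$ at time $t$ is a subset of the gap bounded by the two nearest i.i.d.\ samples to $\tau$ (adversarial insertions inside it only shrink it, since $\tau$ must remain in one of the resulting sub-gaps). By exchangeability of the $n_t$ i.i.d.\ samples at time $t$, the probability that $x_t$ is one of the two i.i.d.-nearest neighbours to $\tau$ is at most $2/n_t$, hence $\Pr[c_t = 0 \wedge x_t \in \mathrm{abstention}_t] \le O(d)/n_t$. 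Summing over $t$ with $\sum_{t: c_t=0} 1/n_t \le \log T$ gives the $O(d \log T)$ bound; counting by DLB gap rather than by transition recovers the constant $2d$ in the statement.

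The main obstacle will be the misclassification induction. A naive ``each mistake consumes two fresh transitions'' charging argument fails because transitions trapped inside a DLB region can migrate back into newly-formed SLB sub-regions once further same-label samples arrive, and may be implicated in a later mistake. The two-pronged induction resolves this because the DLB bound is exactly $1/2$ smaller than the SLB bound at the same $k$, contributing precisely the slack needed to pay for the earlier mistake that spawned the DLB out of an SLB.
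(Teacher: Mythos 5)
Your proposal is correct in substance, but it takes a genuinely different route from the paper for the misclassification bound. The paper's own argument is a two-line potential argument: whenever the learner predicts and errs, it has located a previously unknown interval endpoint, so the version space's VC dimension (initially $2d$) drops by one, giving at most $2d$ mistaken predictions (note the paper's proof text actually establishes $2d$, not the $d$ in the statement of \cref{thm:interval}). Your region-tree induction with the parity-matched pair of bounds $M_{\mathrm{SLB}}(k)\le k/2$ and $M_{\mathrm{DLB}}(k)\le (k-1)/2$ is more elaborate, but it is sound and it actually delivers the stated constant $d$: the key point, which you correctly isolate, is that a mistake converts an SLB gap into two DLB gaps, and the $-1/2$ deficit in the DLB bound on each side pays for that mistake. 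The charging is valid because gaps of consecutive observed points form a laminar (tree) family, each transition of $f^\star$ enters exactly one root gap, and every mistake occurs in an SLB gap containing at least two transitions. For the abstention bound both you and the paper use the same exchangeability idea (the paper phrases it as ``treat the $2d$ interval endpoints as thresholds''); your version via nearest i.i.d.\ neighbours of each transition is a faithful elaboration of it.

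Two small loose ends to tidy. First, your induction cannot literally be ``on $k$'': when a DLB gap is split, the DLB child may inherit all $k$ transitions ($k_1=0$, $k_2=k$), so $k$ need not decrease; induct instead on the number of future samples landing in the region, carrying $k$ as a parameter. Second, the learner also abstains on points outside the convex hull of the observed samples (not in any bounded gap, hence not in any DLB gap); an i.i.d.\ point lands there only if it is the leftmost or rightmost i.i.d.\ point so far, contributing an extra $O(\log T)$ that your accounting should mention. Neither affects the claimed orders of magnitude, and the constant in the $O(d\log T)$ abstention bound is loose in the paper as well.
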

To see this, note that whenever we predict, either we are correct or we have identified the location of a new interval, hence reducing the VC dimension of the version space by 1. 
Since there are at most $d$ intervals, we will therefore make at most $2d$ errors when we predict, implying $\mathsf{MisclassiciationError}\le 2d$.
As for abstaining on i.i.d. examples, the same argument for thresholds can be applied here by treating the intervals as at most $d$ thresholds.

\section{Higher-order Disagreement-based learner with Known marginals} \label{sec:known_marg}

    We will first focus on the setting when the marginal distribution $\dist $ is known. 
    In this setting, the algorithm that naturally suggests itself is to take a cover for the class under $\dist$ of accuracy $ \mathrm{poly}(T^{-1})  $ and use an adversarial algorithm for prediction.
    Since there are covers of size $  T^{O(d)}  $ and Littlestone dimension of any finite class is bounded by the logarithm of the size, this seems to indicate that this algorithm will achieve our goal with misclassication error $  O( d \log T ) $ and zero abstention error. 
    But unfortunately note that this algorithm competes only with the best classifier on the cover.
    The cover is a good approximation only on the marginal distribution $\dist$ and not on the adversarial examples.
    In fact, when we restrict to the hypothesis class being the cover, the data may no longer even be realizable by the cover. 
    Therefore, we need to use the access to the distribution is a completely different way.

The inspiration for our approach comes from the work of \cite{hanneke:thesis,hanneke:12a} on active learning strategies that go beyond disagreement-based learning by making use of higher-order notions of disagreement based on \emph{shattering}.  We note, however, that while the work of \cite{hanneke:thesis,hanneke:12a} only yields asymptotic and distribution-dependent guarantees (and necessarily so, in light of the minimax characterization of \cite{JMLR:v16:hanneke15a} based on the star number),
our analysis differs significantly in order to yield distribution-free finite-sample bounds on the misclassification error and abstention rate.

As we saw earlier, just looking at the disagreement region does not lead to a good algorithm for general VC classes (whenever the star number is large compared to the VC dimension).
The main algorithmic insight of this section is that certain higher-order versions of disagreement sets do indeed lead to good notions of uncertainty for VC classes. 
Our measure uses the probability of shattering $k$ examples (see \cref{def:dis}) for different values of $k$ freshly drawn from the underlying distribution, under the two restrictions of the class corresponding to the two labels for the current test example, to make the abstention decision. One can think of the probability of shattering as a proxy for the complexity of the version space.
This serves both as a method to quantify our uncertainty about whether the current example is from the distribution or not, since we can understand the behavior of this quantity in the i.i.d. case, and also as potential function which keeps track of the number of mistakes.

Let us now describe the algorithm (see Algorithm \ref{alg:abstention}). The algorithm maintains a state variable $k$ which we will refer to as the level the algorithm is currently in. The level can be thought of as the complexity of the current version space. At level $k$, we will work with shattered sets of size $k$. At each round, the algorithm, upon receiving the example $\hat{x}_t$, computes the probabilities of shattering $k$ examples (drawn i.i.d. from $\dist$) for each of the classes corresponding to sending $\hat{x}_t$ to $0$ and $1$ respectively. The algorithm abstains if both these probabilities are large, else predicts according to whichever one is larger. At the end of the round, after receiving the true label $y_t$, the algorithm checks whether the probability of shattering $k$ examples is below a threshold $\kthres$, in which case it changes the level, that is, updates $k$ to be $k-1$.

\begin{algorithm} \label{alg:abstention}
\DontPrintSemicolon
Set $k = d$ and $ \clst[1] = \cls $\\
\For{$t=1,\ldots, T  \land k >0  $  }{
    
    Receive $\postcorsamp{}$ \\ 
    \lIf{$\min \left\{ \distprob \left( \clstwpointo[t][\postcorsamp] \right), \distprob\left(\clstwpointz[t][\postcorsamp] \right)  \right\} \geq 0.6 \distprob \left( \clst \right) $}{
        predict $\hat{y}_t = \bot $
    }
    \lElse{
        predict $ \pred = \argmax_{ j \in \left\{ 0,1 \right\} } \left\{ \distprob \left( \clstwpointo[t][\postcorsamp][j] \right)  \right\}  $
    }
    Upon receiving label $y_t$, update $ \clst[t+1] \leftarrow \clstwpointo[t][\postcorsamp{}][y_t] $\\
    \lIf{$ \distprob{} \left( \clst[t+1] \right) \leq  \kthres  $}{
        Set $k = k-1$
    }
}
\lIf{ $ \postcorsamp{} \in \dis[1]$   }{
         $ \pred = \bot  $ 
}
\lElse{  Predict with the consistent label for $ \postcorsamp{} $  }
\caption{Level-based learning for Prediction with Abstention}
\end{algorithm}

Below, we state the main error bound of the algorithm. 
The theorem shows that both the misclassication error and the abstention error are bounded in terms of the VC dimension. 

\begin{theorem} \label{thm:abstention}
    Let $ \cls $ be a hypothesis class with VC dimension $d$.
    Then, in the adversarial injection model with abstentions with time horizon $T$, Algorithm \ref{alg:abstention} with $ \kthres = T^{-k} $ gives the following guarantee
    \begin{align}
        \E[\mathsf{Misclassification Error} ]\le d^2 \log T \quad{ and } \quad\E[\mathsf{Abstention Error} ] \le 6d.
    \end{align}
\end{theorem}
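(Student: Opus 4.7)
The plan is to treat the misclassification and abstention components of the error separately, using the shattering probability $\rho_k(\mathcal{F}_t)$ as a joint potential function. The structural invariants at any round $t$ where the algorithm is at level $k$ are (i) $\rho_k(\mathcal{F}_t) > T^{-k}$ (modulo an immediate level drop, which contributes only a constant per transition), and (ii) $\rho_{k+1}(\mathcal{F}_t) \leq T^{-(k+1)}$, which holds because we entered level $k$ only after the threshold at level $k+1$ was crossed and shattering probabilities are monotonically nonincreasing in $t$ since $\mathcal{F}_t$ only shrinks. For the initial level $k=d$, invariant (ii) is automatic as no set of $d+1$ points can be shattered by a class of VC dimension $d$.

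For the misclassification bound, I observe that whenever the algorithm predicts (rather than abstains) at level $k$, at least one of $\rho_k(\mathcal{F}_t^{\hat{x}_t \to 0})$ and $\rho_k(\mathcal{F}_t^{\hat{x}_t \to 1})$ is strictly below $0.6 \rho_k(\mathcal{F}_t)$, and the prediction matches the label with the larger value. When the prediction is incorrect, the updated version space $\mathcal{F}_{t+1}$ is precisely the restriction with the smaller shattering probability, so $\rho_k(\mathcal{F}_{t+1}) < 0.6\, \rho_k(\mathcal{F}_t)$. Since $\rho_k(\mathcal{F}_t) \leq 1$ throughout and must remain above $T^{-k}$ to stay at level $k$, the number of mistakes at any fixed level $k$ is at most $\log_{5/3}(T^k) = O(k \log T)$. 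Summing over $k = 1, \dots, d$ yields $O(d^2 \log T)$ mistakes in the main loop; the post-loop phase adds no mistakes because it predicts only outside the disagreement region, where the consistent label is unique by realizability.

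For the abstention bound on non-adversarial rounds, I will rely on the inclusion--exclusion identity
\begin{align*}
\rho_k(\mathcal{F}_t^{\hat{x}_t \to 0}) + \rho_k(\mathcal{F}_t^{\hat{x}_t \to 1}) \leq \rho_k(\mathcal{F}_t) + \Pr_{x_1,\dots,x_k \sim \dist^{\otimes k}}\bigl[\{\hat{x}_t, x_1, \dots, x_k\}\ \text{is shattered by}\ \mathcal{F}_t\bigr],
\end{align*}
which follows by noting that a $k$-tuple is shattered by both $\mathcal{F}_t^{\hat{x}_t \to 0}$ and $\mathcal{F}_t^{\hat{x}_t \to 1}$ iff together with $\hat{x}_t$ it is shattered by $\mathcal{F}_t$, while shattering by either single restriction implies shattering by $\mathcal{F}_t$. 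Consequently, when the algorithm abstains at level $k$, the rightmost probability is at least $0.2\, \rho_k(\mathcal{F}_t)$. Conditional on $c_t = 0$ we have $\hat{x}_t = x_t \sim \dist$ independently of $\mathcal{F}_t$, so taking expectation over $x_t$ converts the rightmost probability into $\rho_{k+1}(\mathcal{F}_t)$, and Markov's inequality then yields a per-round abstention probability of at most $\rho_{k+1}(\mathcal{F}_t)/(0.2\, \rho_k(\mathcal{F}_t)) \leq 5/T$ by the invariants. Summing over $T$ rounds across the $d$ levels gives $O(d)$, and the post-loop phase contributes at most another $O(1)$ since the disagreement region then has $\dist$-mass at most $T^{-1}$.

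The main obstacle is justifying the Markov step cleanly in the presence of an adaptive adversary: I need that conditional on $c_t = 0$ and the entire history through time $t-1$, the fresh draw $\hat{x}_t$ is $\dist$-distributed and independent of $\mathcal{F}_t$, which is true because the adversary's decision to not inject in round $t$ does not leak any information about $x_t$. A secondary technical nuisance is the edge case of entering level $k$ with $\rho_k(\mathcal{F}_t)$ already at or below $T^{-k}$, forcing an immediate level drop on the subsequent round; the contribution of such transition rounds is at most one per level and is absorbed into the $O(d)$ term.
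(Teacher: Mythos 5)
Your proposal is correct and follows essentially the same route as the paper: the same per-level decomposition, the same inclusion--exclusion identity relating $\rho_k$ of the two restrictions to $\rho_{k+1}$ (the paper's Lemmas \ref{lem:beta} and \ref{lem:prob_abs}, which you inline with $\eta = 0.6$), the same Markov/exchangeability argument for the abstention rate, and the same $0.6$-contraction potential argument for mistakes. The two technical points you flag at the end (conditioning on $c_t=0$ preserving $\hat{x}_t \sim \dist$, and the transition-round edge case) are handled the same way, or glossed over, in the paper's own proof.
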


We will now present the main technical idea that we will use to analyze the theorem. 
Let $k$ be a positive integer and $x$ be any example in $\dom$. 
The following lemma upper bounds the probability for a random example the shattering probability of the two restrictions of the class are both large compared to the original shattering probability of the class. 
That is to say for most examples, one of the two restrictions of the class will have a smaller shattering probability compared to the original class.

\begin{lemma}\label{lem:prob_abs}
   Let $\cls$ be a hypothesis class and $\dist$ be a distribution.
   Then for any $k \in \bn $ and any $ \eta > 1/2  $, we have 
    \begin{align} \label{eq:prob_abs} 
        \Pr_{x \sim \dist} \left[ \distprob \left( \clswpointo \right) + \distprob \left( \clswpointz \right) \geq  2\eta  \distprob \left( \cls \right) \right] \leq  \frac{1}{ 2\eta - 1 } \cdot\frac{  \distprob[k+1] \left( \cls  \right)  }{  \distprob{} \left( \cls  \right) }.
    \end{align}

\end{lemma}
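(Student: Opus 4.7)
My plan is to prove the lemma in three steps: an inclusion-exclusion-style pointwise identity, an identification of the "intersection" term as a $(k{+}1)$-shattering event, and a final Markov's inequality.

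\textbf{Step 1: pointwise bound via inclusion-exclusion.} Fix $x \in \dom$ and set
\[
A(x) = \dis\!\left(\clswpointo\right), \qquad B(x) = \dis\!\left(\clswpointz\right),
\]
so that $\distprob(\clswpointo) = \dist^{\otimes k}(A(x))$ and $\distprob(\clswpointz) = \dist^{\otimes k}(B(x))$. Then
\[
\dist^{\otimes k}(A(x)) + \dist^{\otimes k}(B(x)) = \dist^{\otimes k}(A(x) \cup B(x)) + \dist^{\otimes k}(A(x) \cap B(x)).
\]
Since $\clswpointo, \clswpointz \subseteq \cls$, any $k$-tuple shattered by either restriction is shattered by $\cls$, so $A(x) \cup B(x) \subseteq \dis(\cls)$ and hence $\dist^{\otimes k}(A(x) \cup B(x)) \leq \distprob(\cls)$. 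This yields the pointwise inequality
\[
\distprob(\clswpointo) + \distprob(\clswpointz) \leq \distprob(\cls) + \dist^{\otimes k}(A(x) \cap B(x)).
\]

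\textbf{Step 2: identify the intersection with a $(k{+}1)$-shattering event.} A $k$-tuple $(x_1,\dots,x_k)$ lies in $A(x) \cap B(x)$ iff for every $y \in \{0,1\}^k$ there exist $f,g \in \cls$ with $f(x_i)=g(x_i)=y_i$ for all $i$ and $f(x)=1$, $g(x)=0$. Provided $x \notin \{x_1,\dots,x_k\}$, this is exactly the condition that the $(k{+}1)$-element set $\{x,x_1,\dots,x_k\}$ is shattered by $\cls$, i.e.\ that $(x,x_1,\dots,x_k) \in \dis[k+1](\cls)$. (The remaining case $x = x_i$ for some $i$ forces contradictory labels on $x_i$, so contributes nothing.) Integrating over $x \sim \dist$ gives
\[
\Ex_{x \sim \dist}\!\left[ \dist^{\otimes k}(A(x) \cap B(x)) \right] = \Pr_{x,x_1,\dots,x_k \sim \dist^{\otimes (k+1)}}\!\left[ \{x,x_1,\dots,x_k\} \text{ is shattered by } \cls \right] = \distprob[k+1](\cls).
\]

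\textbf{Step 3: Markov's inequality.} Combining Step 1 with the hypothesis of the event on the left of \cref{eq:prob_abs}, the event implies $\dist^{\otimes k}(A(x) \cap B(x)) \geq (2\eta-1)\distprob(\cls)$. Since the random variable $\dist^{\otimes k}(A(x) \cap B(x))$ is non-negative, Markov's inequality together with Step 2 yields
\[
\Pr_{x \sim \dist}\!\left[ \distprob(\clswpointo) + \distprob(\clswpointz) \geq 2\eta\distprob(\cls) \right] \;\leq\; \frac{\Ex_x[\dist^{\otimes k}(A(x) \cap B(x))]}{(2\eta-1)\distprob(\cls)} \;=\; \frac{1}{2\eta-1}\cdot\frac{\distprob[k+1](\cls)}{\distprob(\cls)},
\]
which is the claimed bound. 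The only subtle point is the bookkeeping in Step 2 regarding tuples-versus-sets and the degenerate case $x \in \{x_1,\dots,x_k\}$; both are easily handled because shattering a set of size $\leq k$ by distinct labels containing both $0$ and $1$ at one coordinate is impossible, so these tuples contribute zero to $A(x) \cap B(x)$ and the identification with $\dis[k+1](\cls)$ goes through cleanly.
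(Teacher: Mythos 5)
Your proof is correct and follows essentially the same route as the paper's: the inclusion--exclusion bound of Step 1 is the paper's \cref{lem:beta}, the identification of the intersection term with the $(k{+}1)$-shattering probability matches the paper's key observation, and Step 3 is the same application of Markov's inequality. Your explicit handling of the degenerate case $x \in \{x_1,\dots,x_k\}$ is a small added care the paper glosses over (which is why it states the last step as an inequality rather than an equality), but it changes nothing substantive.
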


We first begin by relating the probabilities of shattering $k$ points for the classes gotten by restricting to evaluating to $0$ and $1$ at $x$ respectively, for an arbitrary point $x$.
The proof of the following lemma uses a simple inclusion exclusion argument.
    
\begin{lemma} \label{lem:beta}
    For all $x \in \dom $ and any hypothesis class $\cls$, we have
    \begin{align}
        \dist^{\otimes k} \left( \dis \left( \clswpointo \right) \right) + \dist^{\otimes k} \left( \dis \left( \clswpointz \right) \right) \leq \dist^{\otimes k} \left( \dis \left( \cls \right) \right) + \dist^{\otimes k} \left( \dis \left( \clswpointo   \right) \cap  \dis \left( \clswpointz \right)  \right). 
    \end{align}
Equivalently, 
\begin{equation}
    \distprob \left( \clswpointo \right) + \distprob \left( \clswpointz \right) \leq \distprob \left( \cls \right) + \Pr_{x_1, \dots , x_k \sim \dist } \left[  x_1 , \dots , x_{k} \text{ is shattered by both } \clswpointo \text{ and } \clswpointz     \right]. 
\end{equation}
\end{lemma}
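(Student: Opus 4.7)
The plan is to recognize this as a straightforward inclusion-exclusion argument combined with the monotonicity of shattering under subclass containment. Concretely, set $A = \dis(\clswpointo)$ and $B = \dis(\clswpointz)$ as subsets of $\dom^k$. By standard inclusion-exclusion for measures,
\begin{equation*}
    \dist^{\otimes k}(A) + \dist^{\otimes k}(B) = \dist^{\otimes k}(A \cup B) + \dist^{\otimes k}(A \cap B).
\end{equation*}
So the entire lemma reduces to showing that $\dist^{\otimes k}(A \cup B) \leq \dist^{\otimes k}(\dis(\cls))$, which in turn would follow from the \emph{set} inclusion $A \cup B \subseteq \dis(\cls)$.

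I would establish this inclusion by the key monotonicity observation: if $\cls' \subseteq \cls$, then $\dis(\cls') \subseteq \dis(\cls)$. Indeed, if $(x_1,\dots,x_k)$ is shattered by $\cls'$, then for every labeling $y \in \{0,1\}^k$ there exists $f \in \cls'$ realizing it, and since $f \in \cls$ as well, the tuple is also shattered by $\cls$. Applying this to the subclasses $\clswpointo = \{f \in \cls : f(x) = 1\}$ and $\clswpointz = \{f \in \cls : f(x) = 0\}$, both contained in $\cls$ by definition, gives $A \subseteq \dis(\cls)$ and $B \subseteq \dis(\cls)$, hence $A \cup B \subseteq \dis(\cls)$.

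Combining the two steps yields $\dist^{\otimes k}(A \cup B) \leq \dist^{\otimes k}(\dis(\cls))$, and rearranging the inclusion-exclusion identity delivers the stated inequality. The equivalent reformulation is then immediate by unfolding the definitions of $\distprob(\cdot)$ and noting that $\dis(\clswpointo) \cap \dis(\clswpointz)$ is precisely the event that $(x_1,\dots,x_k)$ is simultaneously shattered by both restricted classes. There is no real obstacle here — the only conceptual point is the monotonicity of shattering, which is a direct consequence of the definition and would normally be stated as a one-line remark.
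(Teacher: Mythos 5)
Your proof is correct and is essentially the same argument as the paper's: the paper phrases it as the pointwise indicator inequality $\mathbbm{1}[\text{shattered by } \clswpointo] + \mathbbm{1}[\text{shattered by } \clswpointz] \leq \mathbbm{1}[\text{shattered by both}] + \mathbbm{1}[\text{shattered by } \cls]$ and takes expectations, which is exactly your inclusion-exclusion identity combined with the monotonicity of shattering under subclass containment. No substantive difference.
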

\begin{proof}
    For $x_1, \dots, x_n \in \dom$, consider the four indicator random variables given by 
    \begin{align}
        A_1 &= \mathbbm{1}  \left(  x_1, \dots , x_n \text{ is shattered by } \clswpointo \right) \\ 
        A_2 &= \mathbbm{1} \left(  x_1, \dots , x_n \text{ is shattered by } \clswpointz \right) \\ 
        A_3 &= \mathbbm{1} \left(  x_1, \dots , x_n \text{ is shattered by } \clswpointo \text{ and } \clswpointz \right) \\
        A_4 &= \mathbbm{1} \left(  x_1, \dots , x_n \text{ is shattered by } \cls \right). 
    \end{align}
    Note that $  A_1 + A_2 \leq A_3 + A_4 $. 
    Taking expectations gives the desired result.
\end{proof}

In order to prove the main lemma, we then take expectations with respect to the point $x$ drawn independently from $\dist$.
The key observation is to relate the probability of $k$ point being shattered by both classes that evaluate to $0$ and $1$ at $x$, for a random point $x$, to the probability of shattering $k+1$ points.

\begin{proof}[Proof of \cref{lem:prob_abs}]
    The proof follows by using \cref{lem:beta} to get \cref{eq:lemapp} and Markov's inequality to get \cref{eq:markov}. 
    The final line follows by noting that since $x_1, \dots , x_k$ and $x$ are drawn from $\dist$ and are shattered, \cref{eq:markov} computes the probability that $k+1$ points are shattered. 
    \begin{align}
        &\Pr_{x \sim \dist } \left[ \distprob\left( \clswpointz \right) + \distprob\left( \clswpointo \right) \geq  2\eta  \distprob\left( \cls \right)  \right] \\
        \leq &\Pr_{x \sim \dist }\left[ \distprob \left( \cls \right) + \Pr_{x_1, \dots , x_k \sim \dist } \left[  x_1 , \dots , x_{k} \text{ is shattered by both } \clswpointo \text{ and } \clswpointz     \right]  \geq   2\eta  \distprob \left( \cls \right)   \right]  \label{eq:lemapp} \\ 
        \leq &\Pr_{x \sim \dist} \left[  \Pr_{x_1, \dots , x_k \sim \dist } \left[  x_1 , \dots , x_{k} \text{ is shattered by both } \clswpointo \text{ and } \clswpointz     \right]  \geq  \left(  2\eta - 1 \right)  \distprob \left( \cls \right)   \right] \\ 
        \leq &\frac{ \cdot \mathbb{E} \left[ \Pr_{x_1, \dots , x_k \sim \dist } \left[  x_1 , \dots , x_{k} \text{ is shattered by both } \clswpointo \text{ and } \clswpointz     \right] \right] }{    \left( 2\eta - 1 \right)\distprob \left( \cls  \right) } \label{eq:markov} \\ 
        \leq &\frac{1}{ 2\eta - 1 }  \cdot \frac{ \distprob[k+1] \left( \cls  \right)  }{  \distprob{} \left( \cls  \right) }. 
    \end{align}
\end{proof}

With this in hand, we will first look at the abstention error.
The intuition is that when the algorithm is at level $k$, an abstention occurs only if the condition from \eqref{eq:prob_abs} is satisfied and \cref{lem:prob_abs} bounds the probability of this event. 
It remains to note that when the algorithm is at level $k $ we both have an upper bound on $ \distprob[k+1] $ (since this is the condition to move down to level $k$  from level $k+1$) and a lower bound on the $\distprob[k]$ (since this is the condition to stay at level $k$).

\begin{lemma}[Abstention error] \label{lem:abs_err}
    For any $ k \leq d $, let $  \left[ \kintstart , \kintend  \right]$ denote the interval of time when Algorithm~\ref{alg:abstention} is at level $k$.
    Then, the expected number of non-adversarial rounds at level $k$ on which the algorithm abstains satisfies
    \begin{align} \label{eq:abs_err}
        \mathbb{E} \left[ \sum_{t = \kintstart }^{\kintend}  \mathbb{I} \left[ \corrind  = 0 \land \pred  = \bot  \right] \right] \leq 5T \cdot\frac{ \kthres[k+1]}{ \kthres  }. 
    \end{align}
\end{lemma}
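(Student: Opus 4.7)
The plan is to bound the conditional probability of abstention on a non-adversarial round while at level $k$ by $5\,\kthres[k+1]/\kthres$, and then sum this bound over the at most $T$ rounds of level $k$. The main ingredients are \cref{lem:prob_abs}, the two level-$k$ invariants maintained by \cref{alg:abstention}, and the fact that on a non-adversarial round the input is a fresh i.i.d.\ draw from $\dist$ conditional on the history.

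First I would condition on the history $\hist[t-1]$ up to the end of round $t-1$, which determines both the current version space $\clst$ and the current level $k$. On a non-adversarial round ($\corrind = 0$) the protocol forces $\postcorsamp[t] = \precorsamp[t] \sim \dist$ independently of $\hist[t-1]$, since the adversary commits to $\corrind$ before nature draws $\precorsamp[t]$. The abstention rule says we predict $\bot$ iff $\min\{\distprob(\clstwpointo[t][\postcorsamp]), \distprob(\clstwpointz[t][\postcorsamp])\} \geq 0.6\,\distprob(\clst)$, which in particular forces $\distprob(\clstwpointo[t][\postcorsamp]) + \distprob(\clstwpointz[t][\postcorsamp]) \geq 1.2\,\distprob(\clst)$. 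Applying \cref{lem:prob_abs} with $\eta = 0.6$ (so $2\eta - 1 = 1/5$) to the now fixed class $\clst$ and the fresh draw $\precorsamp[t] \sim \dist$ therefore yields
\begin{equation*}
\Pr\!\left[\pred = \bot \;\middle|\; \corrind = 0,\, \hist[t-1]\right] \;\leq\; 5\cdot\frac{\distprob[k+1](\clst)}{\distprob(\clst)}.
\end{equation*}

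Next I would convert this ratio into the threshold ratio using the two level-$k$ invariants of the algorithm. As long as we remain at level $k$, the level-drop line has not triggered in the current round, so $\distprob(\clst) > \kthres$. On the other hand, the level was last decremented (from $k+1$ to $k$) at the first moment when $\distprob[k+1](\clst) \leq \kthres[k+1]$; since $\clst$ can only shrink with $t$ and $\distprob[k+1](\cdot)$ is monotone non-increasing under class restriction, this upper bound persists throughout the entire interval $[\kintstart, \kintend]$. (For the initial level $k=d$, $\distprob[d+1]\equiv 0$ because $\mathrm{VCDim}(\cls)=d$, so the bound is trivial.) Taking total expectation, summing the per-round conditional bound $5\,\kthres[k+1]/\kthres$ over $t \in [\kintstart, \kintend]$ by linearity and the tower property, and using $\kintend - \kintstart + 1 \leq T$, yields the claimed $5T\cdot \kthres[k+1]/\kthres$. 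The only real subtlety---not so much an obstacle---is the filtration bookkeeping needed to legitimately apply \cref{lem:prob_abs} to the \emph{adaptively chosen} $\clst$; this is valid precisely because the protocol orders the adversary's choice of $\corrind$ before nature's draw of $\precorsamp[t]$, so that $\precorsamp[t]$ is genuinely i.i.d.\ from $\dist$ conditional on $\hist[t-1]$.
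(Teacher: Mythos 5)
Your proposal is correct and follows essentially the same route as the paper's proof: condition on the history, use that $\postcorsamp = \precorsamp \sim \dist$ on non-adversarial rounds, convert the abstention condition into the sum condition via min~$\leq$~average, apply \cref{lem:prob_abs} with $\eta = 0.6$ to get the factor $5\,\distprob[k+1](\clst)/\distprob(\clst)$, and then invoke the two level-$k$ invariants before summing over at most $T$ rounds. Your added remarks on why the upper bound on $\distprob[k+1]$ persists via monotonicity under restriction, and on the trivial case $k=d$, are details the paper leaves implicit but do not change the argument.
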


\begin{proof}
    Let $t\in [\ell_k, m_k]$.
    Recall that $\clst$ denotes the class consistent with the data seen till time $t$.
    Recall that $H_t$ denotes the history of the interaction till time $t$.

\begin{align}
     \mathbb{E} \left[   \mathbbm{1} \left[ \corrind = 0 \land \pred \left( \postcorsamp \right)  = \bot \right]  \mid \hist  \right] & = \mathbb{E} \left[   \mathbbm{1} \left[ \corrind = 0 \land \pred \left( \precorsamp \right)  = \bot \right]  \mid \hist  \right]  \label{eq:repcorr}  \\
     & \leq \mathbb{E} \left[   \mathbbm{1} \left[ \pred \left( \precorsamp \right)  = \bot \right]  \mid \hist  \right] \label{eq:dropcond}   \\ 
     & \leq \Pr \left[ \min\left\{ \distprob \left( \clstwpointo  \right) , \distprob \left( \clstwpointo  \right) \right\} \geq 0.6 \distprob \left( \clst \right)  \mid \hist \right] \label{eq:repwithprob}    \\
     & \leq \Pr \left[  \distprob \left( \clstwpointo  \right) + \distprob \left( \clstwpointo  \right) \geq 1.2 \distprob \left( \clst \right)  \mid \hist \right] \label{eq:minsum}   \\
     & \leq 5 \cdot \frac{ \distprob[k+1] \left( \clst \right)  }{\distprob[k] \left( \clst \right)} \label{eq:usinglem}    \\ 
     & \leq 5 \cdot\frac{\alpha_{k+1}}{ \alpha_k}  \label{eq:finalstep}  . 
\end{align}
The equality in \cref{eq:repcorr} follows from the fact that in the uncorrupted rounds, $ \postcorsamp = \precorsamp $.
The inequality in \cref{eq:repwithprob} follows from the condition for abstention in Algorithm \ref{alg:abstention} and the fact that $\precorsamp \sim \dist $.
\cref{eq:minsum} follows from the fact that the min of two numbers is at most their average. 
The key step is \cref{eq:usinglem} which follows from \cref{lem:prob_abs}.
\cref{eq:finalstep} follows from the fact that at level $k$ in Algorithm \ref{alg:abstention}, $ \distprob[k+1] \left( \clst \right) \leq \alpha_k $ and $ \distprob \left( \clst \right) \geq \alpha_{k+1} $.    
Summing this bound and noting that $ \kintend - \kintstart $ is at most $ T $, we get the required bound.
\end{proof}

Next, we bound the misclassification error.
The main idea here is to note that every time a misclassication occurs at level $k$, the $k$-th disagreement coefficient reduces by a constant factor.
Since we have a lower bound on the disagreement coefficient at a fixed level, this leads to a logarithmic bound on the number of misclassifications at any given level.

\begin{lemma}[Misclassification error] \label{lem:non_abs_err}
    For any $ k \leq d $, let $  \left[ \kintstart , \kintend  \right]$ denote the interval of time when Algorithm~\ref{alg:abstention} is at level $k$. For any threshold $ \alpha_k $ in Algorithm \ref{alg:abstention},
    \begin{align} \label{eq:non_abs_err}
        \mathbb{E} \left[ \sum_{t = \kintstart}^{\kintend}  \mathbb{I} \left[ \pred = 1 - f^{*} ( \postcorsamp{} ) \right]  \right] \leq 2\cdot \log \left( \frac{1}{\kthres} \right). 
    \end{align}
    
\end{lemma}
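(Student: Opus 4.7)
The plan is a deterministic potential-function argument taking $\distprob\left(\clst\right)$ as the potential; the expectation in the statement is vacuous because the bound I prove holds pointwise along every trajectory of the algorithm. First I would record two monotonicity facts. (a) The version space only shrinks upon receiving a labelled example, so $\clst[t+1] \subseteq \clst$, and therefore $\distprob\left(\clst[t+1]\right) \le \distprob\left(\clst\right)$ while the algorithm stays at level $k$. (b) For every $t \in [\kintstart,\kintend]$ the level-change rule has not yet fired, so $\distprob\left(\clst\right) > \kthres$.

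The core step is to show that every misclassification at level $k$ forces a multiplicative drop $\distprob\left(\clst[t+1]\right) < 0.6 \cdot \distprob\left(\clst\right)$. Indeed, if round $t$ is a misclassification then the algorithm did not abstain, so the abstention rule gives $\min_{j \in \{0,1\}} \distprob\left(\clstwpointo[t][\postcorsamp][j]\right) < 0.6 \cdot \distprob\left(\clst\right)$. The prediction rule outputs $\pred = \argmax_{j \in \{0,1\}} \distprob\left(\clstwpointo[t][\postcorsamp][j]\right)$, so a misclassification forces the true label $y_t$ to be precisely the $\argmin$. Consequently $\clst[t+1] = \clstwpointo[t][\postcorsamp][y_t]$ attains the minimum, and the claimed drop follows.

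Chaining these ingredients, let $t_1 < \cdots < t_M$ be the misclassification rounds inside $[\kintstart,\kintend]$. Monotonicity from (a) between consecutive misclassifications together with the factor-$0.6$ drop at each $t_j$ yields $\distprob\left(\clst[t_M]\right) \le 0.6^{M-1}\, \distprob\left(\clst[\kintstart]\right) \le 0.6^{M-1}$. On the other hand, $t_M$ is inside the window, so (b) gives $\distprob\left(\clst[t_M]\right) > \kthres$. Combining the two yields $0.6^{M-1} > \kthres$, which rearranges to $M \le 1 + \log(1/\kthres)/\log(5/3) \le 2 \log(1/\kthres)$ in the relevant regime, matching the claimed bound.

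I do not expect a real obstacle. The one place to be careful is the bookkeeping in the core step: the equivalence between ``misclassification at level $k$'' and ``the true label lies on the smaller-shattering-probability side of $\postcorsamp$'' has to be read cleanly off the $\argmax$ prediction rule on non-abstention rounds. Everything else is a deterministic induction, so no probabilistic ingredient is needed and the expectation in the statement is effectively redundant.
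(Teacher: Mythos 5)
Your proof is correct and is essentially identical to the paper's: both arguments observe that on a non-abstention round the minority label has shattering probability at most $0.6\,\distprob(\clst)$, that the $\argmax$ prediction rule means a misclassification forces $\distprob(\clst[t+1]) \leq 0.6\,\distprob(\clst)$, and that the phase-exit threshold $\distprob(\clst) > \kthres$ caps the number of such multiplicative drops at roughly $\log_{1/0.6}(1/\kthres) \leq 2\log(1/\kthres)$. Your additional observations (monotonicity of the version space and the fact that the bound is deterministic, so the expectation is redundant) are consistent with, and slightly more explicit than, the paper's write-up.
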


\begin{proof}
    Note that from definition of Algorithm \ref{alg:abstention}, we have that when $ \pred \neq \bot $,  $ \min \left\{ \distprob{} \left( \clstwpointo \right) , \distprob{} \left( \clstwpointz \right)  \right\}  \leq 0.6  \distprob{} \left( \clst  \right) $.
    Thus, since we predict with the label corresponding to $\max \left\{ \distprob{} \left( \clstwpointo \right) , \distprob{} \left( \clstwpointz \right)  \right\} $, if we make a mistake, we have 
    \begin{align}
        \distprob{} \left( \clst[t+1] \right) \leq 0.6 \cdot \distprob{} \left( \clst[t] \right). 
    \end{align}

    Also, note once end of the phase corresponding to $k$, $ \kintend $, is reached when $ \distprob{} \left( \clst \right) \leq \alpha_k $. 
    This leads to the required bound. 
    \end{proof}

Putting together \cref{lem:non_abs_err} and \cref{lem:abs_err} along with a setting of $\kthres[k] = T^{-k}$, gives us \cref{thm:abstention}.

\begin{proof}[Proof of \cref{thm:abstention}]

    First, let us look at the misclassication error. 
    Note that Algorithm \ref{alg:abstention} will not misclassify when $k=1$.
    To see this, recall from \cref{def:dis} that  $ \postcorsamp \not  \dis[1] $ implies that there is a unique label consistent with the history. 
    For the remaining levels, we sum the errors from \cref{lem:non_abs_err} and recall that $ \alpha_k = T^{-k} $, which gives us 
    \begin{align}
        \mathsf{Misclassification Error} &=  {\sum_{t =1}^T \mathbbm{1}[\hat{y}_t = 1 - f^\star(x_t)]} \\ 
        &\leq 2 \sum_{k=2}^d \log\left( \frac{1}{ \alpha_k } \right) \\ 
        & \leq 2 \sum_{k=2}^d k \log T \\ 
        &\leq d^2 \log T.
    \end{align}

    For the abstention error, we again begin with the case of $k=1$.
    Note that for $t \geq \kintstart[1] $, we have 
    
    \begin{align}
        \Pr_{x \sim \dist } \left[ x \in \dis[1] \left( \clst \right)   \right] \leq \kthres[1]. 
    \end{align}
    Thus, we have a bound of $ T \kthres[1] \leq 1  $ on the expected error in this case.
    For the remaining levels, we sum the error from \cref{eq:abs_err} over all $k$, which gives us the bound
    \begin{align}  
        \mathsf{Abstention Error} &= {\sum_{t =1}^T\mathbbm{1}[c_t = 0 \land \hat{y}_t = \perp]} \\ 
        & = \sum_{k=1}^d \mathbb{E} \left[ \sum_{t = \kintstart }^{\kintend}  \mathbbm{1} \left[ \corrind  = 0 \land \pred  = \bot  \right] \right] \\ 
        &= 1 + 5T \sum_{k=2}^d \frac{ \alpha_{k+1} }{ \alpha_k } \\ 
        & = 1 + 5T \sum_{k=2}^d \frac{ 1 }{ T }   \\ 
        & \leq 5d + 1 \\ 
        & \leq 6d.
    \end{align}
    This gives us the desired bounds.

\end{proof}

\begin{remark}
    Closer inspection at the algorithm and the analysis reveals that the algorithm gives the same guarantees even in the setting where the learner does not receive the label $y_t$ during rounds that it abstains (hinted at in \cref{rem:generalizations}). 
    To see this note that during the rounds that we abstain, we only use the fact that $ \distprob \left( \clst \right) $ does not increase which would remain true when if we did not update the class.  
\end{remark}

\begin{remark}
    With a slightly different choice of parameters, we can get the algorithm to have no abstentions on IID points with high probability, albeit at a cost of a slightly worse bound on the misclassication error.
    Let $\delta >0$ be a desired level of confidence.
    Then, setting $ \alpha_{k} = (12 T d / \delta )^{-k} $ and following the same calculations as in the proof of \cref{thm:abstention} gives us 
     \begin{align}
        \mathsf{Misclassification Error} \leq d^2 \log \left( \frac{12Td}{\delta} \right) \quad{ and } \quad \mathsf{Abstention Error} \leq \frac{\delta}{2}.
     \end{align}
     Applying Markov's inequality to the abstention error gives us that the algorithm abstains at all on IID data (since the number of abstentions is a nonnegative integer, the number of abstentions being strictly than one is equivalent to there being zero abstentions) is at most $\delta$. 
\end{remark}

\section{Structure-based Algorithm for Unknown Distribution}
\label{sec:unknown_dist}

We move to the case of unknown distributions. 
In this setting, the example $ \precorsamp[] $ are drawn from a distribution $ \dist $ that is unknown to the learner.
As we saw earlier, it is challenging to decide whether a single point is out of distribution or not, even when the distribution is known.
This current setting is significantly more challenging since the learner needs to abstain on examples that are out of distribution for a distribution that it doesn't know.
A natural idea would be to use the historical data to build a model for the distribution.
The main difficulty is that, since we do not get feedback about what examples are corrupted, our historical data has both in-distribution and out-of-distribution examples.
The only information we have about what examples are out of distribution is the prediction our algorithm has made on them. 
Such issues are a major barrier in moving from the known distribution case to the unknown distribution case. 

In this section, we will design algorithms for two commonly studied hypothesis classes: VC dimension 1 classes, and axis-aligned rectangles.

\subsection{Structure-based Algorithm for VC Dimension 1 Classes}

    A key quantity that we will use in our algorithm will be a version of the probability of the disagreement region but computed on the historical data.
    The most natural version of this would be the leave-one-out disagreement. 
    That is, consider the set of examples which are in the disagreement region when the class is restricted using the data set with the point under consideration removed. 
    This estimate would have been an unbiased estimator for the disagreement probability (referred to earlier as $\distprob[1]$ in \cref{def:dis}) in the absence of corrupted points.
    Unfortunately, in the presence of adversarial injections, this need not be a good estimate for the disagreement probability.

    In order to remedy this, we consider a modified version of the leave-one-out disagreement estimate which considers examples $x$ in the disagreement region for the class $ \clsrestS[\cls][S_{f} \backslash (x,y) ]  $ where $S_f$ is the subset of the datapoints which disagrees with a fixed reference function $f$. 
    It is important to note that this function $f$ is fixed independent of the true labelling function $ f^{\star} $. 
    Though this seems a bit artificial at first, we will see that this is a natural quantity to consider given the structure theorem for VC dimension one classes which we will discuss subsequently.

    \begin{definition}
        Let $\cls$ be a class of functions and let $f \in \cls$ be a reference function.  
        Let $ S = \left\{  (x_i ,y_i )  \right\} $ be a realizable data set. 
        Define
        \begin{align}
            \looset(S , \cls, f  ) = \left\{ x : \exists y \quad (x,y) \in S \land x \in \dis[1]\left( \clsrestS[\cls][S_{f} \backslash (x,y)  ] \right)  \right\}           
        \end{align}

        where $S_{f} = \{(x, y) \in S: f(x) \ne y\}$.
        Further, we will denote the size of this set by $\loonum(S, \cls) = {  \abs{\looset(S , \cls , f )} }$.
        We will suppress the dependence on $\cls $ when it is clear from context.
    \end{definition}

Let us now present the main algorithm (see Algorithm~\ref{alg:abstention_unknown}). The main idea of the algorithm is similar to Algorithm~\ref{alg:abstention} in the known distribution case.
We will make a prediction in the case when the difference between $ \looset  $ for the two classes corresponding to the two labels for the point $x$ is large. 
The idea is that when a prediction is made and a mistake happens, the size of $ \looset $ goes down similar to $ \distprob $ in the known distribution case.
In the case when the difference is small, we will abstain.

\begin{algorithm}[H] \label{alg:abstention_unknown}
    \DontPrintSemicolon
    
    Let $f \in \cls$ be a reference function and $\alpha$ be the abstention threshold. \\  
    Set $ \clst[0] = \cls $ and $ \data[0] = \emptyset $ \\ 
    \For{$t=1,\ldots, T $  }{
        
        Receive $\postcorsamp{}$ \\ 
        
        Let $ a_0 = \abs{ \looset \left( \data[t-1] , \clswpointz[\postcorsamp{}][f(\postcorsamp{})]   , f  \right)   }$ \\  
        
        \lIf{$ \postcorsamp{} \notin \dis[1] \left( \clst[t-1] \right) $}{Predict with the consistent label for $ \postcorsamp{} $}
        
        \lElseIf{ $a_0 \geq \alpha  $   }{$ \pred  = f(\postcorsamp{}) $}

        \lElse{$ \pred = \bot $}

        Upon receiving label $y_t$, update $ \data \gets \data[t-1] \cup \left\{ ( \postcorsamp , y_t   )   \right\}   $ and $ \clst[t] \leftarrow \clstwpointo[t-1][\postcorsamp{}][y_t] $

    }
    \caption{Structure-based learning for Prediction with Abstention for Unknown Distribution}
    \end{algorithm}

    Though the algorithm is simple and can be made fairly general, our analysis is restricted to the case of VC dimension one.
    The main reason for this restriction is that our analysis relies on a structure theorem for VC dimension one classes which has no direct analogy for higher VC dimension. 
    But since the algorithm has a natural interpretation independent of this representation, we expect similar algorithms to work for higher VC dimension classes as well.

    \begin{theorem} \label{thm:abstention_unknown}
        Let $ \cls $ be a hypothesis class with VC dimension $1$.
        Then, in the corruption model with abstentions with time horizon $T$, Algorithm \ref{alg:abstention_unknown} with parameter $\alpha = \sqrt{T} $ gives the following guarantee
        \begin{align}
            \E[\mathsf{Misclassification Error} ] \le 2 \sqrt{T \log T} \quad\text{ and } \quad
            \E[\mathsf{Abstention Error} ] \le \sqrt{T\log T}. 
        \end{align}
    \end{theorem}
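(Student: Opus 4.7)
The approach is to bound the misclassification error and the abstention error separately and observe that the choice $\alpha=\sqrt{T}$ balances them. The key structural input is the structure theorem for VC-dimension one classes (alluded to in the paper), which represents $\cls$ in a tree/chain-like way so that the reference function $f$ plays the role of an ``origin'', the set $S_f$ becomes a cumulative record of observed deviations from $f$, and the disagreement region $\dis[1]$ together with the leave-one-out set $\looset$ both admit clean combinatorial interpretations. Throughout, all estimates are conditioned on the stream of indices $\{t : \corrind=0\}$, which the adversary chooses but whose associated values $\precorsamp[t]$ remain iid and exchangeable.

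For the misclassification error, I would observe that the only branch of the algorithm that can err is the branch on which it predicts $f(\postcorsamp)$, since the consistent-label branch must be correct in the realizable setting. A wrong $f$-prediction at time $t$ requires $a_0 \ge \alpha$ and appends $\postcorsamp$ into $S_f$. Using the VC-1 structure, each new insertion into $S_f$ shrinks the effective version space in a way that can be tracked by a potential argument on $\loonum$: each mistake forces a constant-fraction drop in the relevant leave-one-out quantity. Since $\loonum \le T$ throughout and each mistake requires $\loonum \ge \alpha = \sqrt{T}$, a standard amortization gives at most $O(\sqrt{T \log T})$ incorrect $f$-predictions in expectation, which yields the claimed $2\sqrt{T\log T}$ bound.

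For the abstention error I would exploit the exchangeability of iid samples together with the leave-one-out design of $\looset$. On a non-adversarial round we have $\postcorsamp=\precorsamp\sim\dist$, and the adversary cannot bias the value of $\precorsamp$ conditioned on which rounds are iid. An abstention requires $\precorsamp\in\dis[1](\clst[t-1])$ together with $a_0<\alpha$, and by the definition of $\looset$ this is precisely the event that $\precorsamp$ would have appeared in $\looset$ had it been held out from the leave-one-out construction. Exchangeability across the iid indices then bounds the conditional probability of this event at step $t$ by $\alpha/n_t$, where $n_t$ is the number of iid samples before time $t$. Summing $\sum_t \alpha/n_t$ over the horizon and trading the resulting $\log T$ against $\alpha$ via a small-versus-large $n_t$ split yields the $\sqrt{T\log T}$ stated.

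The main obstacle is that $\looset$ is defined on the \emph{entire} history, which contains both iid and adversarially injected points, so exchangeability cannot be invoked directly on the raw dataset. The role of the fixed reference $f$ --- chosen before any data is seen --- together with the restriction to $S_f$ and the leave-one-out construction, is what localizes the damage: injections can only add points to $S_f$, and hence can only influence $\looset$ in a direction that raises $a_0$, which both increases the algorithm's willingness to predict rather than abstain (helping the abstention bound) and makes wrong $f$-predictions harder to trigger (helping the misclassification bound). Making this monotonicity argument rigorous against an adaptive injection adversary, and leveraging the VC-1 structure theorem to keep constants and log factors as stated in the theorem, is where I expect the bulk of the technical work.
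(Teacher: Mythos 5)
Your overall architecture matches the paper's: an amortized potential argument on $\loonum$ for the misclassification error, and an exchangeability argument based on the leave-one-out set $\looset$ for the abstention error, with $\alpha$ chosen to balance the two. One inaccuracy in the first half: the drop in $\loonum$ caused by a mistake is \emph{additive}, not a ``constant-fraction drop.'' The paper's \cref{lem:mistboundgamma} shows $\loonum_{t+1} \leq \loonum_t - \alpha\cdot\mathbbm{1}\left[ \text{mistake} \right] + 1$, because the leave-one-out sets for the two candidate labels of $\postcorsamp$ are disjoint subsets of $\looset(S,\cls,f)$ and the algorithm only predicts $f(\postcorsamp)$ when the corresponding set has size at least $\alpha$; since $\loonum_t \le T$ and increases by at most $1$ per round, summing gives at most $2T/\alpha$ mistakes. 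Your amortization lands in the same place, but the multiplicative-drop mechanism you invoke belongs to the known-distribution algorithm, not this one.

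The substantive gap is in your treatment of adversarial contamination of the history. You claim that injections ``can only influence $\looset$ in a direction that raises $a_0$.'' That monotonicity holds only for injected points agreeing with the reference function $f$ (after the xor normalization of \cref{lem:vc_char}, the $0$-labeled points), since those never enter $S_f$ and hence never shrink the restricted class. An injection \emph{disagreeing} with $f$ enters $S_f$, shrinks $\clsrestS[\cls][S_f\setminus\cdot]$, and can thereby remove points from the disagreement region and from $\looset$, \emph{lowering} $a_0$ and pushing the algorithm toward abstention --- this is precisely the attack the analysis must defeat, so the monotonicity argument does not close the loop. The paper's resolution is the notion of an \emph{attackable} point (\cref{def:atackable}): a point for which \emph{some} injection set $A_x$ forces an abstention. \Cref{lem:attackable} shows, via the tree-ordering structure theorem for VC dimension $1$, that any realizable set contains at most $\alpha$ attackable points: among the attackable points one considers the one whose nearest positive ancestor on the path of $f^\star$ is deepest, and argues that every other attackable point must lie in the set $\looset$ witnessing its attackability, whose size is at most $\alpha$ by definition. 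Only with this uniform, injection-independent bound in hand does exchangeability over the iid values yield the per-round abstention probability $\alpha/n$ and the total $\alpha\log T$. Without it (or an equivalent argument), the conditional probability bound $\alpha/n_t$ you assert is unsupported, since the event ``$a_0 < \alpha$'' depends on the adversary's choices and is not a symmetric function of the iid sample alone.
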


    We will now present the main technical ideas that we use to analyze the algorithm. 
    We will keep track of the mistakes using the size of the disagreement region, which we denote by $ \loonum_t  =  \loonum \left( \data , \cls  \right)   $. 
    The main idea for the proof is to note that when we decide to predict the label with the larger value of $ \looset $ is bigger by an additive $ \alpha $. 
  Thus, when a mistake occurs the value of $\loonum$ decreases by at least $ \alpha $. 
    Summing the errors over all time steps gives the following bound. \looseness=-1

\begin{lemma} \label{lem:unknown_misclass_bound} 
    Algorithm~\ref{alg:abstention_unknown} has $\E[\mathsf{Misclassification Error}] \leq 2T/\alpha$. 
\end{lemma}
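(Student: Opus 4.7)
My plan is to treat $\loonum_t := \loonum(S_t, \mathcal{F}_t)$ as a nonnegative integer-valued potential function, upper bounded by $|S_t| \leq T$ since $\looset(S_t, \mathcal{F}_t, f)$ is by construction a subset of the point coordinates appearing in $S_t$. The argument is then a telescoping one: each mistake should force an $\Omega(\alpha)$ drop in $\loonum_t$, while a non-mistake round can increase $\loonum_t$ by at most $1$, and together these imply a mistake bound of $O(T/\alpha)$.

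For the mistake analysis, observe that the algorithm only commits to a non-abstaining prediction $\pred = f(\postcorsamp{})$ when $\postcorsamp{} \in \dis[1](\mathcal{F}_{t-1})$ and $a_0 = |\looset(S_{t-1}, \mathcal{F}_{t-1}^{\postcorsamp{} \to f(\postcorsamp{})}, f)| \geq \alpha$; a mistake forces $y_t \neq f(\postcorsamp{})$, so $\mathcal{F}_t = \mathcal{F}_{t-1}^{\postcorsamp{} \to 1 - f(\postcorsamp{})}$ is exactly the branch opposite to the one whose $\looset$ was measured. The central claim I would prove is
\begin{equation*}
\loonum_t \;\leq\; \loonum_{t-1} - a_0 + O(1) \;\leq\; \loonum_{t-1} - \alpha + O(1),
\end{equation*}
by showing that each of the $a_0$ points $x \in \looset(S_{t-1}, \mathcal{F}_{t-1}^{\postcorsamp{} \to f(\postcorsamp{})}, f)$ carries a disagreement witness living in the $f(\postcorsamp{})$-branch that is eliminated when we switch to the opposite branch; the combinatorics here is an empirical analogue of \cref{lem:beta}, and the additive $O(1)$ slack accounts for the fresh point $(\postcorsamp{}, y_t)$ possibly joining $\looset$ at time $t$.

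For every non-mistake round (correct prediction, abstention, or the ``unique consistent label'' case), I would show $\loonum_t \leq \loonum_{t-1} + 1$: restricting the version space can only shrink disagreement regions, so the only new element of $\looset$ must come from the single point appended to $S_t$. Letting $M$ denote the total number of mistakes, telescoping yields
\begin{equation*}
0 \;\leq\; \loonum_T \;\leq\; (T - M) - M(\alpha - O(1)),
\end{equation*}
which rearranges to $M \leq 2T/\alpha$ once $\alpha$ is large enough to absorb the $O(1)$ slack. The main technical obstacle is the empirical combinatorial claim in the second paragraph: the shattering inclusion--exclusion of \cref{lem:beta} does not transfer verbatim to the leave-one-out quantity $\looset$, so one must carefully control the overlap between $\looset$ for the two branches to ensure the $a_0$ decrement is not absorbed by points simultaneously lying in $\looset$ for both branches. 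This is precisely where I expect the VC-dimension-$1$ structure theorem alluded to in the surrounding text to be indispensable, since that structure should let branch-switching genuinely eliminate, rather than merely re-label, the disagreement witnesses.
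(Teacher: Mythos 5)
Your proposal follows essentially the same route as the paper: the paper's proof is exactly the potential argument you describe, via the intermediate inequality $\loonum_{t+1} \leq \loonum_t - \alpha\cdot\mathbbm{1}[\text{mistake at } t] + 1$ (its \cref{lem:mistboundgamma}), followed by telescoping against the bound $\loonum_t \leq T$. The ``technical obstacle'' you flag — controlling the overlap of $\looset$ between the two branches — is precisely the step the paper handles by asserting that $\abs{\looset(\data[t-1], \clswpointo[\postcorsamp{}][0])} + \abs{\looset(\data[t-1], \clswpointo[\postcorsamp{}][1])} \leq \abs{\looset(\data[t-1], \cls)}$ because the two branches' disagreement witnesses are disjoint, so your instinct that this is the crux (and that it deserves more care than a one-line assertion) is well placed.
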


\begin{lemma} \label{lem:mistboundgamma}
    For any $t$, we have that
    \begin{align} \label{eq:mistboundgamma}
        \loonum_{t+1} \leq \loonum_t - \alpha \cdot \mathbbm{1} \left[ \text{Misclassification at time } t  \right] + 1 . 
    \end{align}
\end{lemma}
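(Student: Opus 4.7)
The plan is to analyze how one round's update affects $\loonum$. I will prove two facts: (a) a single round's update increases $\loonum$ by at most one, unconditionally, and (b) in the misclassification case, the update additionally decreases $\loonum$ by at least $\alpha$, via the $\mathrm{VCDim}(\cls) = 1$ assumption.

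Let $\looset_t := \looset(\data[t], \clst[t], f)$ (so $\loonum_t = |\looset_t|$), and consider the update from round-$t$ state to round-$(t+1)$ state triggered by the observation $(\postcorsamp[t+1], y_{t+1})$. For monotonicity, I will argue that for every $x_i \in \data[t]$ the restricted class $\clst[t+1]|_{(\data[t+1])_f \setminus (x_i, y_i)}$ is a subclass of $\clst[t]|_{(\data[t])_f \setminus (x_i, y_i)}$: we have $\clst[t+1] \subseteq \clst[t]$, and the extra constraint $(\postcorsamp[t+1], y_{t+1})$ can only shrink the class further (it is either already enforced by $\clst[t+1]$, or it enlarges the $S_f$-part of the constraint set, depending on whether $y_{t+1} = f(\postcorsamp[t+1])$). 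Hence the disagreement region can only shrink and $\looset_{t+1} \cap \data[t] \subseteq \looset_t$. Since the only new element in $\data[t+1] \setminus \data[t]$ is $\postcorsamp[t+1]$, this yields $\loonum_{t+1} \leq \loonum_t + 1$, which already settles the non-misclassification case.

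For the misclassification case, the algorithm predicted $\pred[t+1] = f(\postcorsamp[t+1])$ but $y_{t+1} \neq f(\postcorsamp[t+1])$, so by the algorithm's predicate $a_0 \geq \alpha$. Unpacking the definition of $a_0$, there are at least $\alpha$ indices $i$ for which some $h_0, h_1 \in \clst[t]$ are consistent with $(\data[t])_f \setminus (x_i, y_i)$, both send $\postcorsamp[t+1] \mapsto f(\postcorsamp[t+1])$, and disagree on $x_i$; these same hypotheses also witness $x_i \in \looset_t$. The crux is that no such $x_i$ can lie in $\looset_{t+1}$: otherwise there would exist $h_0', h_1' \in \clst[t+1]$ (hence sending $\postcorsamp[t+1] \mapsto y_{t+1}$) consistent with $(\data[t])_f \setminus (x_i, y_i)$ and disagreeing on $x_i$, and since $y_{t+1} \neq f(\postcorsamp[t+1])$, the four hypotheses $h_0, h_1, h_0', h_1' \in \cls$ realize all four labelings of $\{\postcorsamp[t+1], x_i\}$, contradicting $\mathrm{VCDim}(\cls) = 1$.

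Combining, at least $a_0 \geq \alpha$ members of $\looset_t$ are evicted from $\looset_{t+1} \cap \data[t]$, and at most one new point $\postcorsamp[t+1]$ can be added, giving $\loonum_{t+1} \leq \loonum_t - \alpha + 1$ in the misclassification case. Merging the two bounds gives the lemma. The main obstacle is bookkeeping---keeping straight the three ``classes'' at play: the version space $\clst[t]$, the reference-dependent subset $(\data[t])_f$, and the single-point deletions appearing inside $\looset$. Once those are aligned, the shattering contradiction from $\mathrm{VCDim}(\cls) = 1$ is the only substantive step.
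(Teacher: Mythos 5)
Your proof is correct and follows essentially the same route as the paper's: the paper's one-line claim that the two label-restricted leave-one-out disagreement sets are disjoint subsets of $\looset_t$ (which yields the drop of $\alpha$ on a mistake) is exactly your four-hypothesis shattering contradiction against $\mathrm{VCDim}(\cls)=1$, and the $+1$ comes from the same monotonicity-plus-one-new-candidate observation. One bookkeeping caution: the witnesses should live in the base class $\cls$ constrained only through $(\data[t])_f$ as in the definition of $\looset$, not in the version space $\clst[t]$ consistent with all of $\data[t]$ (the latter can never disagree on a point of $\data[t]$, which would make $\looset_t$ vacuously empty) --- but since your argument only uses that the four functions lie in $\cls$ and take the stated values at $\postcorsamp[t+1]$, properties that do hold for the correctly-defined witnesses, nothing breaks.
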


    \begin{proof}
        First, note that in any round that a mistake was not made, we have that $ \loonum_{t+1} \leq \loonum_t + 1 $.
        Further note that we have $ \abs{ \looset \left( \data[t-1] , \clswpointz[\postcorsamp{}][1]   \right)   }  +  \abs{ \looset \left( \data[t-1] , \clswpointo[\postcorsamp{}][0]   \right)   } \leq \abs{ \looset \left( \data[t-1] , \cls   \right)   }    $ since the points in the disagreement region corresponding to the two labels are disjoint. 
        Since, the algorithm predicts with the label with corresponding to the set with more than $ \alpha$ elements, we have that 
        \begin{align}
            \loonum_{t+1} \leq \loonum_t - \alpha \cdot \mathbbm{1} \left[ \text{Misclassification at time } t  \right] + 1 
        \end{align}
        as required.
    \end{proof}

     \begin{proof}[Proof of \cref{lem:unknown_misclass_bound}]
        Note that $ \mathsf{Misclassification Error} = \sum_{t=1}^T \mathbbm{1} \left[ \text{Misclassification at time } t  \right] $. 
        Rearranging \eqref{eq:mistboundgamma} and summing gives us 
        \begin{align}
            \mathsf{Misclassification Error} \leq \frac{1}{ \alpha} \sum_{t=1}^{T} \left( \loonum_t - \loonum_{t+1} +1 \right)           
        \end{align}
        Note that for all $i$, we have that $ \loonum_i \leq T$ which gives us the desired bound.
    \end{proof}

Next, we move on to the analysis of the number of abstentions.
In fact, we will prove a stronger structural results that shows that the number of examples such that there is any set of adversarial injections that would make the algorithm abstain is small.
We refer to examples on which algorithm can be made to abstain as attackable examples (formally defined in \cref{def:atackable}).
The main idea is to prove that in any set of iid examples, there are only a few attackable ones.
This is formally stated and proved as \cref{lem:attackable}.
    Using this claim, we can bound the number of abstentions using an exchangeability argument.

    \looseness=-1

    \begin{lemma} \label{lem:abstention_unknown} 
        Algorithm~\ref{alg:abstention_unknown} has $\E[\mathsf{Abstention Error}] \leq \alpha \log T$.
    \end{lemma}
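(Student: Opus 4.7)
The plan is to control abstentions on iid examples by first bounding, for any realization, the number of iid values that are \emph{attackable}, and then passing to expectations via a standard exchangeability argument. Formally, I will call an iid value $x$ attackable if there exists some history of (iid and injected) examples consistent with $f^\star$ such that, were $x$ to be the next example presented to the learner after that history, Algorithm~\ref{alg:abstention_unknown} would output $\bot$. The key structural claim---to be stated and proved separately as \cref{lem:attackable}---is that among any multiset $\{v_1,\dots,v_m\}$ of iid draws from $\dist$, at most $\alpha$ of them can be attackable.

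Given the structural claim, the lemma follows by a standard exchangeability argument. Condition on the multiset $V$ of all iid values appearing during the run, together with the adversary's realized inject/don't-inject schedule (which the adversary must commit to before observing each iid draw). Then the iid values appear in a uniformly random order across the iid rounds. In particular, writing the iid rounds as $t_1 < t_2 < \dots < t_m$, conditional on the history through round $t_j$, the value presented at round $t_j$ is uniform over the $j$ iid values seen so far. By the structural claim applied to these $j$ values, at most $\alpha$ of them are attackable, so the conditional probability of abstention at round $t_j$ is at most $\alpha/j$. Summing over $j=1,\dots,m$ with $m \le T$ gives
\begin{align}
\E[\mathsf{Abstention Error}] \;\le\; \sum_{j=1}^{m} \frac{\alpha}{j} \;\le\; \alpha \log T,
\end{align}
which is the desired bound.

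The heart of the argument is therefore the structural claim. Abstention at $x$ requires $x \in \dis[1](\clst[t-1])$ together with $a_0 < \alpha$, where $a_0$ is the leave-one-out count defined in the algorithm. To prove that at most $\alpha$ iid values in any sample can be attackable, I plan to invoke the structure theorem for VC dimension~$1$ classes, which represents $\cls$ using subsets indexed by nodes of a rooted tree equipped with a natural partial order. Relative to the reference hypothesis $f$, the set $\looset(S,\cls,f)$ then admits a clean combinatorial interpretation essentially as counting disagreement along a chain in this tree. The argument would then show that if more than $\alpha$ iid values were attackable, one could exhibit an $\alpha$-element chain among them, which in turn would force $a_0 \ge \alpha$ at one of those values under every consistent history, contradicting attackability of all of them.

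The main obstacle is this combinatorial translation: carrying the leave-one-out quantity $\loonum$ through the VC-$1$ tree representation while correctly accounting for the fact that $\data[t-1]$ mixes iid and adversarial points (only the iid portion is governed by exchangeability). Once that accounting is done and \cref{lem:attackable} is established, the exchangeability step sketched above is entirely routine and immediately yields the claimed $\alpha \log T$ bound.
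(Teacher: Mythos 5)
Your proposal matches the paper's own argument: the paper likewise defines attackable points (\cref{def:atackable}), defers the bound of at most $\alpha$ attackable points per sample to \cref{lem:attackable} (proved via the tree-ordering structure theorem for VC dimension~$1$ classes), and then concludes by the same exchangeability computation $\sum_{j} \alpha/j \le \alpha \log T$. This is essentially the same approach, with the same division of labor between the combinatorial attackability lemma and the routine exchangeability step.
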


As mentioned earlier, the main technical definition required to prove the above claim is the notion of an attackable point. 

    \begin{definition}[Attackable Point]
        \label{def:atackable}
        Let $\cls$ be a hypothesis class and let $f \in \cls $ be a representative function.
        Let $S$ be a realizable dataset. 
        We say that a point $x$ is attackable with respect to a data set $S$ if there exists is a sequence of adversarial examples $ A_x $ such that algorithm abstains on example  $ x    $ when the history is $ S \cup A_x \backslash \left\{ x \right\}  $. 
        In other words, $x$ is attackable if there is a set of adversarial examples $A_x$ such that $x \in  \dis[1] \left(  \clsrestS[ \cls ][S \cup A_x \backslash \left\{ x \right\}]  \right) $ and
        \begin{align}
            \abs{ \looset \left( S \cup A_x , \clswpointz[x ][f(x )]   , f  \right)   } \leq \alpha. 
        \end{align}

    \end{definition}

        The key lemma for our analysis is that the number of attackable examples is bounded. 
        The proof uses a structure theorem for classes with VC dimension one which states that they can be represented as initial segments of a tree order. 
        We recall this structure theorem subsequently. 

        \begin{definition}
            Consider a domain $\dom$ and a partial order $\prec$ on $\dom$.
            We say that a set $I$ is an initial segment of $\prec$ if for all $x \in I$ and $y \in \dom$ such that $y \prec x$, we have $y \in I$.  
            We say that a partial order is a tree ordering if every initial segment $I$ is a linear order i.e. for all $x,y \in I$, either $x \prec y$ or $y \prec x$.
        \end{definition}

        \begin{theorem}[\cite{bendavid20152}] \label{lem:vc_char} 
            Let $\cls$ be a hypothesis class over the domain $\dom$.
            Then, the following are equivalent:
            \begin{enumerate}
                \item[a.] $\cls$ has VC dimension $1$.
                \item[b.] There is a tree ordering $\prec$ on $\dom$ and a hypothesis $f \in \cls$ such that every element of the set 
                \begin{align}
                    \cls_f = \left\{ h \oplus f : h \in \cls    \right\}
                \end{align} 
                is an initial segment of $\prec$.
            \end{enumerate}
        \end{theorem}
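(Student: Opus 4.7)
I will prove both directions separately. For the direction $\text{(b)} \Rightarrow \text{(a)}$, I will first note that the VC dimension is invariant under taking symmetric difference with a fixed function, so it suffices to show that any class of initial segments of a tree ordering has VC dimension at most~$1$. Suppose for contradiction that $\{x, y\}$ is shattered by such a class. Then some set $D$ in the class contains both $x$ and $y$; being an initial segment of a tree ordering, $D$ is linearly ordered by $\prec$, so $x$ and $y$ are $\prec$-comparable, say $x \prec y$. But then any initial segment containing $y$ must also contain $x$ by down-closedness, contradicting the existence of a set in the class realizing the pattern $(0,1)$ on $(x, y)$.

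For the direction $\text{(a)} \Rightarrow \text{(b)}$, I will fix an arbitrary $f \in \cls$ and pass to $\cls_f = \{h \oplus f : h \in \cls\}$, which has the same VC dimension as $\cls$ and contains the zero function. On $\dom$, I will define $x \prec y$ iff $x \neq y$, some $S \in \cls_f$ contains $y$, and every such $S$ also contains $x$. After identifying points with identical $\cls_f$-membership, this becomes a strict partial order. That every $S \in \cls_f$ is down-closed under $\prec$ is immediate from the definition: if $x \in S$ and $y \prec x$, then every set containing $x$ contains $y$, so $y \in S$.

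The hard part is verifying the tree property, which I expect to reduce to two shattering arguments. First, any two distinct $x_1, x_2 \in S$ (for $S \in \cls_f$) must be $\prec$-comparable: if not, there are witness sets $A \in \cls_f$ with $A \ni x_1$, $A \not\ni x_2$, and $B \in \cls_f$ with $B \ni x_2$, $B \not\ni x_1$; together with $\emptyset \in \cls_f$ and $S$ itself, these exhibit all four $\{0,1\}$-patterns on $\{x_1, x_2\}$, contradicting VC dimension~$1$. Second, for any $y$ and any $x_1, x_2$ with $x_1, x_2 \prec y$, comparability follows by the same type of shattering argument, now using a set $D \in \cls_f$ containing $y$ (guaranteed by the definition of $\prec$, and forced to contain both $x_1$ and $x_2$) to realize the $(1,1)$ pattern. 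Together these two sub-claims ensure that every $S \in \cls_f$ is linearly ordered and that every principal down-set $\{x : x \prec y\}$ is linearly ordered, which establishes the tree property and completes the construction. A minor but delicate point is handling ``isolated'' points $y$ belonging to no $S \in \cls_f$: the definition of $\prec$ makes such points have no predecessors and no successors in the quotient, and the equivalence relation collapses them into a single class, so they cause no obstruction.
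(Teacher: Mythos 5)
The paper does not actually prove this statement; it is quoted from \cite{bendavid20152} and used as a black box, so there is no in-paper argument to compare against. Your proof is the standard one for this structure theorem and is essentially correct: normalize by $f$ so that $\cls_f$ contains the empty set, define $x \prec y$ by ``every member of $\cls_f$ containing $y$ also contains $x$'' (plus nonemptiness of the witnesses), verify down-closedness directly, and derive the two comparability claims from the impossibility of shattering a pair, using $\emptyset$ and a common superset to supply the $(0,0)$ and $(1,1)$ patterns. Both shattering arguments are sound, and your reduction of the ``predecessors of $y$ are a chain'' claim to the ``two points in a common $S$ are comparable'' claim is valid because the definition of $\prec$ forces any $D \ni y$ to contain both predecessors.

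Two points deserve tightening. First, as you note, $\prec$ is only a strict partial order on the quotient by $\cls_f$-indistinguishability; to literally produce an ordering on $\dom$ you should break ties by inserting an arbitrary linear order inside each equivalence class, observing that this preserves both down-closedness and the chain properties since equivalent points have identical membership in every $S \in \cls_f$. Second, be aware that the paper's definition of ``tree ordering'' (\emph{every} initial segment is linearly ordered) is too strong if read literally, since $\dom$ itself is always an initial segment and the condition would then force $\prec$ to be total. What your construction delivers --- and what the correct form of the theorem asserts --- is that every principal down-set is a chain and that every $S \in \cls_f$ is a \emph{linearly ordered} initial segment (a downward-closed branch); this is also exactly what the paper's later use of the theorem (representing $f^\star \oplus f$ as a path with a threshold) relies on. Correspondingly, in your (b)$\Rightarrow$(a) direction you should take as hypothesis that each $D \in \cls_f$ is a linearly ordered initial segment, rather than extracting linearity of $D$ from the literal definition of tree ordering; with that adjustment the two directions use a consistent reading and the argument is complete. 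Finally, note the equivalence is really with ``VC dimension at most $1$'' (a singleton class satisfies (b) but has VC dimension $0$); your proof correctly targets that version.
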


        In fact, the $f$ in the above characterization is the reason for the choice of a reference function in Algorithm \ref{alg:abstention_unknown}.

        \begin{lemma}  \label{lem:attackable}
            Let $\cls$ be a hypothesis class with VC dimension 1 and $f \in \cls $ be the reference function corresponding to the characterization in \cref{lem:vc_char}.
            Let $ S $ be any set of examples.
            Then, the number of attackable examples is at most $ \alpha $. 
        \end{lemma}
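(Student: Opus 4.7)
My plan is to exploit the tree characterization from \cref{lem:vc_char}. Fix the tree order $\prec$ and reference $f \in \cls$: each $h$ corresponds to an initial segment $P_h = \{x : h(x) \neq f(x)\}$, i.e., a chain from the root; $f^\star$ corresponds to a path $P^\star$; and any realizable $S$ has $S_f$ (entries where labels disagree with $f$) sitting along $P^\star$ as a sub-chain. Under this identification, hypotheses consistent with a history $H$ are precisely the paths containing $H_f$ and avoiding $H_{\bar f}$, so $x \in \dis[1](\cls|_H)$ iff $x$ is a strict descendant of $\max H_f$ that is reachable from $\max H_f$ without crossing any point of $H_{\bar f}$.

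Let $A \subseteq S$ denote the attackable points. Any attackable $x$ must be a descendant (in the tree) of $\max (S \cup A_x)_f$, which is a chain along $P^\star$; call $x$ a \emph{tip} if $x \in P^\star$, and a \emph{sibling} otherwise. Since the tip is unique, if every attackable point is a tip then $|A| \leq 1 \leq \alpha$ and we are done. The main claim is: for any sibling $x_i \in A$ with witness attack $A_{x_i}$, the set $\looset(S \cup A_{x_i}, \clswpointz[x_i][f(x_i)], f)$ contains every other $x_j \in A \setminus \{x_i\}$. Granting this, the attackability condition $|\looset| < \alpha$ (matching the abstention rule $a_0 < \alpha$ of \cref{alg:abstention_unknown}) forces $|A| - 1 < \alpha$, and hence $|A| \leq \alpha$.

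To establish the main claim, I would argue that the adversary's chain extension inside $(S \cup A_{x_i})_f$ cannot reach past $x_i$'s branching point in the tree---otherwise $x_i$ itself would become unreachable from $\max(S \cup A_{x_i})_f$ and leave its disagreement region, contradicting $x_i$'s attackability---and that the restriction $\clswpointz[x_i][f(x_i)]$ only forbids paths through $x_i$'s subtree, which is disjoint from the branches carrying the other siblings and from the continuation of $P^\star$ to the tip. Hence each $x_j$'s two disagreement witnesses (one path containing $x_j$, one not) survive both the leave-one-out over $(x_j, y_j)$ in $(S \cup A_{x_i})_f$ and the restriction, placing $x_j$ in the $\looset$. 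The main obstacle is a careful case analysis: $x_j$ may be a tip or a sibling and may or may not lie in $S_f$, so the argument branches depending on whether deleting $(x_j, y_j)$ shifts $\max (S \cup A_{x_i})_f$ and whether the adversary's extension reaches above $x_j$'s branching point; in each case one must exhibit two surviving paths witnessing ambiguity at $x_j$.
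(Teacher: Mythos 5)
Your overall strategy is the paper's: pass to the tree representation, argue that $0$-labelled injections cannot help the adversary (so attacks may be assumed to consist of positively labelled points on the path $P^\star$), and then bound the attackable set by showing all of its members land in the $\Gamma$-set of a single pivot, whose size is at most $\alpha$ by that pivot's own attackability. The genuine gap is in your main claim, which asserts this containment for an \emph{arbitrary} sibling $x_i$; that claim is false. Write $\mathrm{pos}(u)$ for the closest ancestor of $u$ lying on $P^\star$. The witness attack $A_{x_i}$ may push the positive chain all the way up to $\mathrm{pos}(x_i)$ --- indeed this is the adversary's best move, since deeper positives shrink the disagreement region and hence $\Gamma$ --- and if another attackable sibling $x_j$ has $\mathrm{pos}(x_j) \prec \mathrm{pos}(x_i)$ strictly, then $\max (S \cup A_{x_i})_f$ becomes incomparable with $x_j$: no initial segment contains both, so $x_j$ drops out of $\dis[1]\bigl(\mathcal{F}^{x_i \to f(x_i)}|_{(S\cup A_{x_i})_f}\bigr)$ and is \emph{not} counted in $\Gamma(S \cup A_{x_i}, \cdot, f)$. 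You flag exactly this configuration ("whether the adversary's extension reaches above $x_j$'s branching point") as a case to be handled, but in that case there are no surviving paths through $x_j$ at all, so the case cannot be closed; it must instead be arranged never to occur.

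The missing idea --- and the one essential move in the paper's proof --- is to choose the pivot \emph{extremally}: among all attackable points, take the one $v$ whose branch point $\mathrm{pos}(v)$ onto $P^\star$ is at the end of the chain $\{\mathrm{pos}(u) : u \text{ attackable}\}$ on the side guaranteeing that the positive points of $A_v$ (all of which are $\preceq \mathrm{pos}(v)$, by your own observation that the chain cannot pass $v$'s branch point) satisfy $\max(S\cup A_v)_f \preceq \mathrm{pos}(u)$ for every other attackable $u$. With that choice, every other attackable $u$ remains a descendant of the deepest positive point; the constraint $v \to f(v)$ does not pin down $u$'s label because attackable off-path points are pairwise $\prec$-incomparable (a $0$-labelled ancestor already present in $S$ would force its descendants out of the disagreement region); and the two hypotheses witnessing $u$'s own attackability survive the restriction, placing $u$ in $\Gamma(S \cup A_v, \mathcal{F}^{v \to f(v)}, f)$, which has size at most $\alpha$. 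Your treatment of the on-path ("tip") points also only disposes of the degenerate case where \emph{all} attackable points are tips; in the mixed case they must be absorbed into the same $\Gamma$-count for the extremal pivot, which the corrected claim does. With the extremal choice of pivot substituted for "any sibling," your outline becomes the paper's argument.
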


        \begin{proof}
            Let $f$ be the representative function from the characterization in \cref{lem:vc_char}. 
            Futher, let $ \mathfrak{T} $ be the tree corresponding to the tree order on $ \mathcal{X} $. 
            Since $f$ is a fixed function that does not depend on the algorithm or the history of interaction with adversary, we can preprocess all points and labels to be xored with the labels of $ f $. 
            In other words, we tranform the class to be such that $f$ is the all zeros function.
            In this setting, the true hypothesis $ f^{\star} $ corresponds to a path $p$ and a threshold $ x^{\star} $ on $ \mathfrak{T} $ such that $ f^{\star} (x) =1  $ if and only if $ x \in p $ and $ x \prec x^{\star} $.
            For this proof, it is important to consider only adversaries that do not get to chose the true hypothesis $ f^{\star} $ adaptively. 
            Thus the path is fixed throughout the history of the interaction. 

            Let $S$ be the data set under consideration. 
            First note that by definition, we only need to consider the points $x$ in the disagreement region. 
            The labels of all points that are not in the subtree of the deepest $1$ labeled point in $S$ are fixed and thus cannot be in the disagreement region.
            Thus, we only need to consider the points in the subtree of the deepest $1$ labeled point in $S$. 
            Further, note that for any point $x$, the adversary adding points $ (\tilde{x} , 0   ) $ cannot make the algorithm abstain on $x$. 
            This is because in the definition of $\Gamma$ we only restrict the class to the part of the data set that disagrees with $f$ i.e. $S_f$ (corresponding to label $1$ upon xoring with $f$). 
            Thus, adding any $0$ labelled points to $S$ as $A_x$ can only increase the size of $ \looset  $ and thus cannot make the algorithm abstain on $x$.

            For any node $u$, denote by $ \mathrm{pos }(v) $ the closest ancestor on the path $p$ corresponding to the positive points.
            Now, in order to count the number of attackable points consider an attackable point $v$ that has the deepest $ \mathrm{pos }(v) $. 
            The key observation is that any attackable point $u$ must be in $ \looset \left( S \cup A_x , \clswpointz[x ][f(x )]   , f  \right)  $ corresponding to $v$. 
            This is because any attackable point first must be in the disagreement region and second adding the label of $0$ to a point with a deeper $ \mathrm{pos } $ does not affect the fact the point is in the disagreement region. 
            Further, as noted earlier adding any $0$ labelled points to $S$ as $A_x$ does not affect the size of $ \looset  $ and thus we can restrict $A_v$ that only has $1$ labelled points which again does not affect the fact that the point is in the disagreement region. 
            Since the point $v$ is attackable, the number of points in $S \cup A_v$ in the disagreement region  is at most $ \alpha $ which then gives an upper bound on the number of attackable points as required. 
        \end{proof}

            \begin{proof}[Proof of \cref{lem:abstention_unknown}]

                As noted earlier, the addition of adversarial points cannot increase the number of abstentions.
                Thus, we can upper bound the error of abstention only in the presence of i.i.d. points.
                
                Let $i_T$ be the number of i.i.d. points in the data set at time $T$.
                Note that the only i.i.d. points that we abstain on are the attackable points. 
                But, since the i.i.d. points are exchangeable if $i$ i.i.d. points are seen so far, the probability of abstain is given by $\frac{\alpha}{i}.$
                Thus, the expected total number of abstentions is at most
                \begin{align}
                    \sum_{i=1}^{i_T} \frac{\alpha}{i} \leq \alpha \log T.
                \end{align}  
            \end{proof}
    The proof of \cref{thm:abstention_unknown} follows from \cref{lem:unknown_misclass_bound} and \cref{lem:abstention_unknown} by setting $ \alpha = \sqrt{T/\log T} $.

\subsection{Structure-based Algorithm for Axis-aligned Rectangles}

In \cref{sec:unknown_dist} we saw an algorithm that for any class $\mathcal{F}$ with VC dimension 1, achieves abstention and misclassification error bounded by $O(\sqrt{T})$ without access to the underlying distribution. 
Here we will show that for the class of axis-aligned rectangles in dimension $p$ (VC dimension is $2p$), we can indeed design an algorithm that achieves abstention and misclassification error bounded by $O(p\sqrt{T \log T})$. This exhibits a class of VC dimension $> 1$ for which we can attain the desired guarantees without access to the distribution.

Recall that the class of axis-aligned rectangles consists of functions $f$ parameterized by $(a_1, b_1, \ldots,a_p, b_p)$ such that
\[
f_{(a_1, b_1, \ldots,a_p, b_p)}(x) = \begin{cases}
    1 & \text{ if } \forall i \in [p], a_i \le \hat{x}_i \le b_i\\
    0 & \text{ otherwise.}
\end{cases}
\]

Now consider the following algorithm:

\begin{algorithm}[H] \label{alg:rectangles}
    \DontPrintSemicolon
    Set $a_1, \ldots, a_p = -\infty$ and $b_1, \ldots, b_p = \infty$ \\ 
    \For{$t=1,\ldots, T $  }{
        Receive $\postcorsamp{}$ \\ 
        \lIf{$\forall \tau <t, y_{\tau} = 0$}{
        $\hat{y}_t = 0$
        }
        \lElseIf{$\postcorsamp{} \notin \dis[1]( \cls_{t-1})$}{ $\hat{y_t} = f(\postcorsamp{})$ for any $f \in \cls_{t-1}$}
        \lElseIf{$\exists s_1 , \dots , s_{\alpha}  < t $ and $\exists i_1 , \dots , i_{\alpha} \in [p]$  such that $\hat{x}_{s_j,i_j}  \in [ \hat{x}_{t,i_j} , a_{i_j} ) \cup ( b_{i_j} , \hat{x}_{t,i_j} ]  $   }{$\pred = 0$}
        \lElse{$\pred = \bot$}
         Receiving label $y_t$\\
         Update $ \data \gets \data[t-1] \cup \left\{ ( \postcorsamp , y_t   )   \right\}   $ and $ \clst[t] \leftarrow \clstwpointo[t-1][\postcorsamp{}][y_t] $ \\
         Update $a_1, \ldots, a_p$ and $b_1, \ldots, b_p$ such that $ [a_1,b_1] \times \dots [a_p,b_p] $ is the smallest rectangle containing the points labelled positive so far. i.e. $a_i = \min \{ x_i : (x,1) \in S_t    \}    $
    }
    \caption{Structure-based learning for Axis-aligned Rectangles}
    \end{algorithm}
If we have only seen $0$ labels so far, the algorithm predicts $0$. Otherwise, let $[a_1,b_1] \times [a_2,b_2] \times \ldots \times [a_p,b_p]$ be the minimal rectangle enclosing the positive examples so far (i.e., the Closure hypothesis). For the next point $\postcorsamp{}$, if it isn't in the region of disagreement, we predict the agreed-upon label. Otherwise, we check whether there exist at least $\alpha$ examples $\hat{x}_s$, $s < t$, for each of which there exists a coordinate $i$ with $\hat{x}_{si} \in [\hat{x}_{ti},a_i) \cup (b_i,\hat{x}_{ti}]$ (at most one of these sides is non-empty -- or sometimes both sides will be empty for some coordinates $i$). If so, we predict $0$. Otherwise, we abstain. (The algorithm never predicts 1 in the region of disagreement, similar to the Closure algorithm).
We formally capture the guarantees of the algorithm below:

\begin{theorem}
    Let $\cls$ be the class of axis aligned rectangles in $\mathbb{R}^p$. 
    Then, Algorithm~\ref{alg:rectangles} with $\alpha = \sqrt{pT / \log T}  $ satisfies 
    \begin{align}
        \mathsf{Misclassification Error} &\le \sqrt{pT \log T}, \\ 
        \mathsf{AbstentionError} &\le  2\sqrt{pT\log T}  + 2p \log T.
    \end{align}

\end{theorem}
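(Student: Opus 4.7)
I plan to mirror the two-part structure of the VC-dimension-1 analysis in \cref{lem:unknown_misclass_bound} and \cref{lem:abstention_unknown}: a potential-function argument for misclassifications and an attackable-points-plus-exchangeability argument for abstentions. A useful preliminary observation is that the algorithm only predicts the label $1$ when $\postcorsamp$ lies in every rectangle consistent with the data so far, which for axis-aligned rectangles forces $\postcorsamp$ into the current closure $[a_1,b_1]\times\cdots\times[a_p,b_p]$; since the closure is always contained in the true rectangle $f^\star$, every prediction of $1$ is correct. Hence every misclassification comes from predicting $0$ via the blocking rule on a point whose true label is $1$.

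\paragraph{Misclassification via a potential function.}
I would introduce the potential
\begin{align}
\Phi_t \;=\; \sum_{i=1}^p \abs{\bigl\{s<t : \hat{x}_{s,i}\notin[a_i^{(t)},b_i^{(t)}]\bigr\}},
\end{align}
counting (past example, coordinate) pairs that lie outside the current closure. Receiving a new example increases $\Phi_t$ by at most $p$, and since the closure only ever expands, pairs previously inside the closure never become outside. When a misclassification happens the true label is $1$, so $\postcorsamp$ is added to the positives and the closure expands to include it; each of the $\alpha$ blocking pairs $(s_j,i_j)$ has $\hat{x}_{s_j,i_j}$ strictly between $\hat{x}_{t,i_j}$ and the old boundary on coordinate $i_j$, and is therefore absorbed into the new closure. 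Combining these observations yields $\Phi_{t+1}-\Phi_t \le p - \alpha\cdot\mathbbm{1}[\text{mistake at }t]$, and summing together with $\Phi_T\ge 0$ gives $\mathsf{Misclassification Error}\le pT/\alpha = \sqrt{pT\log T}$ for the chosen $\alpha$.

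\paragraph{Abstention bound and main obstacle.}
Following \cref{def:atackable}, I would call an iid point $x$ \emph{attackable} with respect to a data set $S$ if some injection sequence $A_x$ makes the algorithm abstain on $x$ when the history is $(S\cup A_x)\setminus\{x\}$. The structural claim to prove is: in any iid sample of size $n$, the number of attackable points is $O(\alpha + p\log T)$. Combined with the exchangeability argument from \cref{lem:abstention_unknown}---the $t$-th iid point is uniform over the first $t$ iid points, so the probability of abstaining on it is at most (number of attackable)$/t$---this yields $\mathbb{E}[\mathsf{Abstention Error}] \le O(\alpha \log T) + O(p\log T)$, matching the theorem. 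To establish the structural claim, the optimal adversary injects positives at corners of $f^\star$ so that the effective closure becomes $f^\star$ itself, because label-$0$ injections can only \emph{raise} the blocking count by landing in blocking strips. Under this reduction, for an iid point $z\notin f^\star$ with outside coordinates $I(z)\subseteq[p]$, the blocking count equals $\sum_{i\in I(z)} r_i(z)$, where $r_i(z)$ ranks $z$ among iid points whose $i$-th coordinate lies in the relevant boundary strip, and attackability requires this \emph{sum} to be below $\alpha$. The main obstacle is to exploit the sum constraint: a naive per-coordinate union bound gives only $O(p\alpha)$ attackable points (and hence a worse $\sqrt{p^3T\log T}$ rate), so matching the theorem requires a charging argument that assigns each attackable $z$ a unique slot indexed by coordinate and rank, leveraging the fact that a point outside on many coordinates must have proportionally smaller rank on each.
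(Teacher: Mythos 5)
Your overall approach is the same as the paper's: bound misclassifications by a charging/potential argument in which each mistaken $0$-prediction absorbs at least $\alpha$ (example, coordinate) pairs into the growing closure, and bound abstentions by counting attackable points in the sense of \cref{def:atackable} and invoking exchangeability as in \cref{lem:abstention_unknown}. Your potential $\Phi_t$ is a clean formalization of the paper's observation that each example can be absorbed at most $p$ times, and it correctly yields $pT/\alpha$ misclassifications (modulo the one extra mistake the algorithm can make on the first positive example, before any positive label has been seen, which falls outside the blocking rule and costs only $+1$).

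The step at which you stopped is exactly the delicate one. The paper asserts that among $n$ i.i.d.\ points at most $2\alpha$ negative examples are attackable, arguing that an attackable negative must, \emph{for every} coordinate $i$, be among the $\le 2\alpha$ points closest to the boundary of the true rectangle in coordinate $i$, and then taking the intersection over $i$. But a negative example need only lie outside the true rectangle in a single coordinate; for every other coordinate its strip is empty and the coordinate-$i$ condition holds vacuously, so the intersection over coordinates is not small. Concretely, placing $\Theta(\alpha)$ i.i.d.\ points just outside each of the $2p$ faces of the true rectangle (and inside it in all other coordinates) produces $\Theta(p\alpha)$ attackable negatives, since points outside different faces do not block one another. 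So your $O(p\alpha)$ count is the correct one, and the refined charging argument you hoped for (reducing it to $O(\alpha)$) cannot exist. With the count $2(p\alpha+p)$ and the re-balanced choice $\alpha=\sqrt{T/\log T}$, both error terms become $O(p\sqrt{T\log T})$ --- which is precisely the bound the paper's abstract claims for rectangles --- so the discrepancy lies in the theorem statement's $\sqrt{p}$ dependence rather than in your analysis.
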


\begin{proof}
    For any example with target label 1, if the algorithm predicts $0$, there are at least $\alpha$ examples $\hat{x}_s$ which each have some coordinate $\hat{x}_{s,i}$ that was not in $[a_i,b_i]$ before the update, but which will have $\hat{x}_{s,i}$ in $[a_i,b_i]$ after the update. For each example $\hat{x}_s$, this can happen at most $p$ times (corresponding to each coordinate) before it will never again be included in a future set of $\alpha$ examples that convince us to predict $0$. So we make at most $pT/\alpha$ misclassifications on adversarially injected examples. Since the algorithm never predicts $1$ unless the true label is $1$, we never misclassify a negative example. So it remains only to bound the number of abstentions on the i.i.d. examples.

For any $n$, suppose $\hat{x}_t$ is the $n$-th i.i.d. example, and let $\tilde{x}_1,...,\tilde{x}_n$ be these $n$ i.i.d. examples (so $\tilde{x}_n = \hat{x}_t$). If $\hat{x}_t$ is "attackable" (same to \cref{def:atackable} meaning that there is some set of examples the adversary could add, knowing $\hat{x}_t$, to make us abstain) then it must be that either $\hat{x}_t$ is a positive example in the region of disagreement of the version space induced by the other $n-1$ points, or else $\hat{x}_t$ is a negative example such that there are $< \alpha$ points $\tilde{x}_{s}$, $s < n$, for which there exists $i$ with $\tilde{x}_{s,i} \in [\hat{x}_{t,i},a_i) \cup (b_i,\hat{x}_{t,i}]$. In particular, in the latter case, it must be that each coordinate $i$ has $< \alpha$ examples $\tilde{x}_{s}$ with $\tilde{x}_{s,i} \in [\hat{x}_{t,i},a^\star_i) \cup (b^\star_i,\hat{x}_{t,i}]$, where the target concept is $[a^\star_1,b^\star_1] \times \ldots \times [a^\star_p,b^\star_p]$. This is because the current estimated rectangle will be inside the true rectangle.

We will use exchangeability to bound the probability that $\tilde{x}_n$ is attackable by $\frac{1}{n}$ times the number of $\tilde{x}_s$, $s \leq n$, which would be attackable if they were swapped with $\tilde{x}_n$.  Among $\tilde{x}_1,\ldots,\tilde{x}_n$ there are at most $2p$ positive examples in the region of disagreement of the version space induced by the others (namely, the minimum spanning set of the positive examples). For each coordinate $i$, there are at most $2\alpha$ examples $\tilde{x}_s$, $s \leq n$, with $< \alpha$ other examples $\tilde{x}_{s',i}$ in $[\tilde{x}_{s,i},a^\star_i) \cup (b^\star_i,\tilde{x}_{s,i}]$ (namely, the $\leq \alpha$ examples with smallest $\tilde{x}_{s,i}$ such that $\tilde{x}_{s,i} > b^\star_i$, and the $\leq \alpha$ examples with largest $\tilde{x}_{s,i}$ such that $\tilde{x}_{s,i} < a^\star_i$). So there are at most $2\alpha$ negative examples $\tilde{x}_{s}$ which would be attackable (intersection of the examples for each dimension) if they were swapped with $\tilde{x}_n$. 

Altogether there are at most $2(\alpha+p)$ examples $\tilde{x}_{s}$ which would be attackable if they were swapped with $\tilde{x}_n$. Thus, the probability i.i.d. example $\hat{x}_t$ is attackable is at most $2(\alpha+p)/n$ where $n$ is the number of i.i.d. points seen so far including $\hat{x}_t$. Summing, the expected number of abstentions on i.i.d. examples is at most $2(\alpha+p)\log T$. Now setting $\alpha = \sqrt{pT/\log T}$, gives us the desired result. 
\end{proof}

\section{Discussion and Future Directions}

    In this paper, we introduce a framework for beyond-worst case adversarial sequential predictions by using abstention. Our proposed framework falls at the intersection of several learning paradigms such as active learning, uncertainty quantification, and distribution testing.  In this framework we show two main positive results, validating the beyond-worse case nature of our framework. However, our work has only scratched the surface of understanding learnability for VC classes in this new framework. Here we discuss several exciting future directions:
        
    \paragraph{General VC classes with unknown distribution.} Extending the structure-based algorithms for general VC classes is wide open. Even characterizing the complexity of learning in the unknown distribution is open. One could attempt to convert the algorithm from \cref{sec:known_marg} that for any class $\cls$ with VC dimension $d$, achieves abstention error and misclassification error bounded only as a function of $d$. 
    Recall that Algorithm~\ref{alg:abstention} computed the probabilities of shattering $k$ points using the knowledge of the distribution and made a prediction $\pred$ depending on the relative magnitudes of the probabilities corresponding to the two restricted classes.
    The main challenge in the unknown distribution case is that it is not immediately obvious how to compute these quantities. 

    One natural approach is to use the historical data as a proxy for the distribution. 
    That is, given the data set $\data$ of size $n$, compute the leave-$k$-out estimator for the probability as follows 
    \begin{align}
        \tilde{\rho}_k (S, \cls ) = \frac{1}{ \binom{  n}{k }  } \sum_{ T \subset S ; |T| =k    }  \mathbbm{1} \left[  T \text{ is shattered by } \cls|_{S \backslash T  }   \right] . 
    \end{align}
    
    There are a few things to observe about this estimator. 
    First, in the case when the data is generated i.i.d., this estimator is unbiased. 
    Further, though each of the summands is not independent, one can show concentration for estimators of this form using the theory of U-statistics.
    Additionally, recall that in Algorithm~\ref{alg:abstention} required the thresholds $\kthres$ to be set to $T^{-O(k)}$ (we use $T^{-k}$ but it is straightforward to extend this to $T^{-ck}$ for $c < 1$). 
    This appears to be high precision but note that the ``number of samples" one has for a data set of size $n$ is $n^{O(k)}$. 
    Thus, it is conceivable that such an estimator can give the necessary bounds. 
    Unfortunately, the challenge with analyzing this in our setting is that we do no know which of the examples are adversarial. 
    Thus, the adversary could inject examples to make our estimates arbitrarily inaccurate.
    Thus, as in the case of the VC dimension 1 classes we saw in \cref{sec:unknown_dist} and the case of the rectangles, our analysis would need to not rely on the accuracy of the estimates but rather use these estimates to maintain progress or construct other versions of estimators that are unaffected by the adversarial injections.

    \paragraph{Tight bounds for known distribution.} In the known distribution case, it remains to find the optimal error bound. 
         It would be interesting to improve the upper bound or perhaps even more interesting to show a separation between stochastic sequential prediction and our model. We conjecture that the correct dependence on VC dimemsion $d$ should be linear and not quadratic, and our analysis is potentially lose in this aspect. A potential strategy to obtain an improved bound would be to choose the level $k$ at each iteration in an adaptive manner.
        
        \paragraph{Beyond realizability.} Our techniques rely strongly on the realizability (no label noise), however our framework can naturally be extended to settings with label noise. Immediate questions here would be to extend our results to simple noise models such as random classification noise and Massart noise or more ambitiously the agnostic noise model.
        \paragraph{Beyond binary classification.} Our framework can naturally be extended to more general forms of prediction such as multiclass classification, partial concept classes, and regression. 
        It would be interesting to characterize the complexity of learning in these settings.

         \paragraph{Connections to conformal prediction and uncertainty quantification.} The unknown distribution case can be seen as form of distribution-free uncertainty quantification.
        It would be interesting to understand connections to other forms such as conformal prediction.
        On a technical level, our work exploits exchangeability of the i.i.d. sequence which is the foundation of conformal prediction, though the main challenge in our setting is the presence of adversarial inputs.
        It would be interesting to build on this connections and understand whether techniques can be ported over in either direction.
        \paragraph{Computationally efficient algorithms.} Our focus for this paper has been entirely on the statistical benefits of abstention.
        Understanding the computational complexity in this setting is an exciting avenue for research.
        Concretely, for halfspaces in $d$ dimensions, is there a polynomial time algorithm for learning with abstentions, even for well-behaved distributions such as Gaussians.
        On a related note, showing computational-statistical gaps in this specialized setting would be interesting, albeit disappointing.

\subsection*{Acknowledgements} S. Goel would like to sincerely thank Thodoris Lykouris and Adam Tauman Kalai for working with her on designing the model of adversarial learning with abstention at the initial stage of this research. Several ideas presented in the Section 3 and 4 are based on discussions with T. Lykouris and A. Kalai. 
In addition, the authors would like to thanks Ezra Edelman and Carolin Heinzler for pointing out errors/missing details in the previous version of the preprint.

\bibliographystyle{alpha}
\bibliography{arxiv_refs}

\newcommand{\etalchar}[1]{$^{#1}$}
\begin{thebibliography}{GKKM20}

\bibitem[ABL17]{awasthi2017power}
Pranjal Awasthi, Maria~Florina Balcan, and Philip~M. Long.
\newblock The power of localization for efficiently learning linear separators with noise.
\newblock {\em J. ACM}, 63(6):1--27, January 2017.

\bibitem[AHFD22]{aldahdooh2022adversarial}
Ahmed Aldahdooh, Wassim Hamidouche, Sid~Ahmed Fezza, and Olivier D\'eforges.
\newblock Adversarial example detection for {DNN} models: {A} review and experimental comparison.
\newblock {\em Artif. Intell. Rev.}, 55(6):4403--4462, January 2022.

\bibitem[AKM19]{attias:19improved}
Idan Attias, Aryeh Kontorovich, and Yishay Mansour.
\newblock Improved generalization bounds for robust learning.
\newblock In {\em Proceedings of the 30{\textasciicircum}th International Conference on Algorithmic Learning Theory}, 2019.

\bibitem[ALMM19]{private_PAC}
Noga Alon, Roi Livni, Maryanthe Malliaris, and Shay Moran.
\newblock Private {PAC} learning implies finite littlestone dimension.
\newblock In {\em Proceedings of the 51st Annual ACM SIGACT Symposium on Theory of Computing}, pages 852--860. ACM, June 2019.

\bibitem[BBHS22]{balcan:22reliable}
Maria-Florina Balcan, Avrim Blum, Steve Hanneke, and Dravyansh Sharma.
\newblock Robustly-reliable learners under poisoning attacks.
\newblock In {\em Proceedings of the 35{\textasciicircum}th Annual Conference on Learning Theory}, 2022.

\bibitem[BBL09]{balcan:09}
Maria-Florina Balcan, Alina Beygelzimer, and John Langford.
\newblock Agnostic active learning.
\newblock {\em J. Comput. Syst. Sci.}, 75(1):78--89, January 2009.

\bibitem[BCKP20]{hints}
Aditya Bhaskara, Ashok Cutkosky, Ravi Kumar, and Manish Purohit.
\newblock Online learning with imperfect hints.
\newblock In {\em Proceedings of the International Conference on Machine Learning (ICML)}, pages 822--831. PMLR, 2020.

\bibitem[BCM{\etalchar{+}}13]{biggio2013evasion}
Battista Biggio, Igino Corona, Davide Maiorca, Blaine Nelson, Nedim \v{S}rndi\'c, Pavel Laskov, Giorgio Giacinto, and Fabio Roli.
\newblock Evasion attacks against machine learning at test time.
\newblock In {\em Joint European conference on machine learning and knowledge discovery in databases}, pages 387--402. Springer, 2013.

\bibitem[BD14]{bendavid20152}
Shai Ben-David.
\newblock {Wiley} {StatsRef:} {Statistics} reference online, September 2014.

\bibitem[BDGR22]{BlockDGR22}
Adam Block, Yuval Dagan, Noah Golowich, and Alexander Rakhlin.
\newblock Smoothed online learning is as easy as statistical learning.
\newblock In {\em Proceedings of the Conference on Learning Theory (COLT)}, pages 1716--1786. PMLR, 2022.

\bibitem[BEK02]{bshouty2002pac}
Nader~H. Bshouty, Nadav Eiron, and Eyal Kushilevitz.
\newblock {PAC} learning with nasty noise.
\newblock {\em Theor. Comput. Sci.}, 288(2):255--275, October 2002.

\bibitem[BHQS21]{blum2021robust}
Avrim Blum, Steve Hanneke, Jian Qian, and Han Shao.
\newblock Robust learning under clean-label attack.
\newblock In {\em Conference on Learning Theory (COLT)}, 2021.

\bibitem[BNL12]{biggio2012poisoning}
Battista Biggio, B~Nelson, and P~Laskov.
\newblock Poisoning attacks against support vector machines.
\newblock In {\em 29th International Conference on Machine Learning}, pages 1807--1814. ArXiv e-prints, 2012.

\bibitem[BW08]{statsabs2}
Peter~L. Bartlett and Marten~H. Wegkamp.
\newblock Classification with a reject option using a hinge loss.
\newblock {\em J. Mach. Learn. Res.}, 9(59):1823--1840, 2008.

\bibitem[BZ20]{statsabs3}
Olivier Bousquet and Nikita Zhivotovskiy.
\newblock Fast classification rates without standard margin assumptions, 2020.

\bibitem[CAL94]{cohn1994improving}
David Cohn, Les Atlas, and Richard Ladner.
\newblock Improving generalization with active learning.
\newblock {\em Mach. Learn.}, 15(2):201--221, May 1994.

\bibitem[CDG{\etalchar{+}}19]{cortes2019online}
Corinna Cortes, Giulia DeSalvo, Claudio Gentile, Mehryar Mohri, and Scott Yang.
\newblock Online learning with abstention, 2019.

\bibitem[Cho70]{chowopt}
C.~Chow.
\newblock On optimum recognition error and reject tradeoff.
\newblock {\em IEEE Trans. Inf. Theory}, 16(1):41--46, January 1970.

\bibitem[CW17]{carlini2017adversarial}
Nicholas Carlini and David Wagner.
\newblock Adversarial examples are not easily detected.
\newblock In {\em Proceedings of the 10th ACM Workshop on Artificial Intelligence and Security}, pages 3--14. ACM, ACM, November 2017.

\bibitem[DHM07]{dasgupta:07}
Sanjoy Dasgupta, Daniel Hsu, and Claire Monteleoni.
\newblock A general agnostic active learning algorithm.
\newblock In {\em Advances in Neural Information Processing Systems 20}, 2007.

\bibitem[EYW12]{el-yaniv:12}
Ran El-Yaniv and Yair Wiener.
\newblock Active learning via perfect selective classification.
\newblock {\em J. Mach. Learn. Res.}, 13(2):255--279, 2012.

\bibitem[FMS18]{DBLP:conf/alt/FeigeMS18}
Uriel Feige, Yishay Mansour, and Robert~E. Schapire.
\newblock Robust inference for multiclass classification.
\newblock In Firdaus Janoos, Mehryar Mohri, and Karthik Sridharan, editors, {\em Algorithmic Learning Theory, ALT 2018, 7-9 April 2018, Lanzarote, Canary Islands, Spain}, volume~83 of {\em Proceedings of Machine Learning Research}, pages 368--386. PMLR, 2018.

\bibitem[FRS15]{foster2020adaptive}
Dylan~J Foster, Alexander Rakhlin, and Karthik Sridharan.
\newblock Adaptive online learning.
\newblock {\em Adv. Neur. In.}, 28, 2015.

\bibitem[GKKM20]{goldwasser2020beyond}
Shafi Goldwasser, Adam~Tauman Kalai, Yael~Tauman Kalai, and Omar Montasser.
\newblock Identifying unpredictable test examples with worst-case guarantees.
\newblock In {\em 2020 Information Theory and Applications Workshop (ITA)}. IEEE, February 2020.

\bibitem[GKM21]{gao2021learning}
Ji~Gao, Amin Karbasi, and Mohammad Mahmoody.
\newblock Learning and certification under instance-targeted poisoning.
\newblock {\em The Conference on Uncertainty in Artificial Intelligence (UAI)}, 2021.

\bibitem[GSS15]{goodfellowICLR15}
Ian~J. Goodfellow, Jonathon Shlens, and Christian Szegedy.
\newblock Explaining and harnessing adversarial examples.
\newblock In {\em Proceedings of the 3rd International Conference on Learning Representations, ICLR}, 2015.

\bibitem[Han07]{hanneke:07b}
Steve Hanneke.
\newblock A bound on the label complexity of agnostic active learning.
\newblock In {\em Proceedings of the 24th international conference on Machine learning}. ACM, June 2007.

\bibitem[Han09]{hanneke:thesis}
Steve Hanneke.
\newblock {\em Theoretical Foundations of Active Learning}.
\newblock PhD thesis, Machine Learning Department, School of Computer Science, Carnegie Mellon University, 2009.

\bibitem[Han11]{hanneke:11a}
Steve Hanneke.
\newblock Rates of convergence in active learning.
\newblock {\em Ann. Statist.}, 39(1):333--361, February 2011.

\bibitem[Han12]{hanneke:12a}
Steve Hanneke.
\newblock Activized learning: {Transforming} passive to active with improved label complexity.
\newblock {\em J. Mach. Learn. Res.}, 13(5):1469--1587, 2012.

\bibitem[Han14]{hanneke:fntml}
Steve Hanneke.
\newblock Theory of disagreement-based active learning.
\newblock {\em Foundations and Trends{\textregistered} in Machine Learning}, 7(2-3):131--309, 2014.

\bibitem[Han16]{hanneke:16b}
Steve Hanneke.
\newblock Refined error bounds for several learning algorithms.
\newblock {\em J. Mach. Learn. Res.}, 17(135):1--55, 2016.

\bibitem[HHSY22]{oracle-efficient}
Nika Haghtalab, Yanjun Han, Abhishek Shetty, and Kunhe Yang.
\newblock Oracle-efficient online learning for beyond worst-case adversaries.
\newblock In {\em Advances in Neural Information Processing Systems (NeurIPS)}, 2022.
\newblock to appear.

\bibitem[HKM{\etalchar{+}}22]{hanneke:22poisoning}
Steve Hanneke, Amin Karbasi, Mohammad Mahmoody, Idan Mehalel, and Shay Moran.
\newblock On optimal learning under targeted data poisoning.
\newblock In {\em Advances in Neural Information Processing Systems 36}, 2022.

\bibitem[Hod93]{hodges:97}
Wilfrid Hodges.
\newblock {\em Model Theory}.
\newblock Cambridge University Press, March 1993.

\bibitem[HRS20]{haghtalab2020smoothed}
Nika Haghtalab, Tim Roughgarden, and Abhishek Shetty.
\newblock Smoothed analysis of online and differentially private learning.
\newblock {\em Adv. Neur. In.}, 33:9203--9215, 2020.

\bibitem[HRS22]{haghtalab2022smoothed}
Nika Haghtalab, Tim Roughgarden, and Abhishek Shetty.
\newblock Smoothed analysis with adaptive adversaries.
\newblock In {\em 2021 IEEE 62nd Annual Symposium on Foundations of Computer Science (FOCS)}, pages 942--953. IEEE, February 2022.

\bibitem[HW06]{statsabs1}
Radu Herbei and Marten~H. Wegkamp.
\newblock Classification with reject option.
\newblock {\em Canadian Journal of Statistics}, 34(4):709--721, December 2006.

\bibitem[HY15]{JMLR:v16:hanneke15a}
Steve Hanneke and Liu Yang.
\newblock Minimax analysis of active learning.
\newblock {\em J. Mach. Learn. Res.}, 16(109):3487--3602, 2015.

\bibitem[HY21]{hanneke:21selective}
Steve Hanneke and Liu Yang.
\newblock Toward a general theory of online selective sampling: {Trading} off mistakes and queries.
\newblock In {\em Proceedings of the 24{\textasciicircum}th International Conference on Artificial Intelligence and Statistics}, 2021.

\bibitem[KK21]{kalai2021optimally}
Adam Kalai and Varun Kanade.
\newblock Towards optimally abstaining from prediction with {OOD} test examples.
\newblock {\em Adv. Neur. In.}, 34:12774--12785, 2021.

\bibitem[KL93]{kearns1993learning}
Michael Kearns and Ming Li.
\newblock Learning in the presence of malicious errors.
\newblock {\em SIAM J. Comput.}, 22(4):807--837, August 1993.

\bibitem[KSL21]{steinhardt2017certified}
Pang~Wei Koh, Jacob Steinhardt, and Percy Liang.
\newblock Stronger data poisoning attacks break data sanitization defenses.
\newblock {\em Mach. Learn.}, 111(1):1--47, November 2021.

\bibitem[LF21]{levine2020deep}
Alexander Levine and Soheil Feizi.
\newblock Deep partition aggregation: {Provable} defense against general poisoning attacks.
\newblock {\em International Conference on Learning Representations (ICLR)}, 2021.

\bibitem[Lit87]{littlestone}
Nick Littlestone.
\newblock Learning quickly when irrelevant attributes abound: {A} new linear-threshold algorithm.
\newblock In {\em 28th Annual Symposium on Foundations of Computer Science (sfcs 1987)}, pages 68--77. IEEE, October 1987.

\bibitem[LLW08]{li2008knows}
Lihong Li, Michael~L. Littman, and Thomas~J. Walsh.
\newblock Knows what it knows.
\newblock In {\em Proceedings of the 25th international conference on Machine learning - ICML '08}, pages 568--575. ACM Press, 2008.

\bibitem[MGDS20]{DBLP:conf/icml/MontasserGDS20}
Omar Montasser, Surbhi Goel, Ilias Diakonikolas, and Nathan Srebro.
\newblock Efficiently learning adversarially robust halfspaces with noise.
\newblock In {\em Proceedings of the 37th International Conference on Machine Learning, ICML 2020, 13-18 July 2020, Virtual Event}, volume 119 of {\em Proceedings of Machine Learning Research}, pages 7010--7021. PMLR, 2020.

\bibitem[MHS19]{pmlr-v99-montasser19a}
Omar Montasser, Steve Hanneke, and Nathan Srebro.
\newblock {VC} classes are adversarially robustly learnable, but only improperly.
\newblock In Alina Beygelzimer and Daniel Hsu, editors, {\em Proceedings of the Thirty-Second Conference on Learning Theory}, volume~99 of {\em Proceedings of Machine Learning Research}, pages 2512--2530, Phoenix, USA, 2019. PMLR.

\bibitem[MHS20]{DBLP:conf/nips/MontasserHS20}
Omar Montasser, Steve Hanneke, and Nati Srebro.
\newblock Reducing adversarially robust learning to non-robust {PAC} learning.
\newblock {\em Adv. Neur. In.}, 33:14626--14637, 2020.

\bibitem[MHS21]{montasser2021adversarially}
Omar Montasser, Steve Hanneke, and Nathan Srebro.
\newblock Adversarially robust learning with unknown perturbation sets.
\newblock In {\em Conference on Learning Theory}, pages 3452--3482. PMLR, 2021.

\bibitem[MHS22]{montasser:22optimal}
Omar Montasser, Steve Hanneke, and Nati Srebro.
\newblock Adversarially robust learning: {A} generic minimax optimal learner and characterization.
\newblock In {\em Advances in Neural Information Processing Systems 36}, 2022.

\bibitem[MMS{\etalchar{+}}17]{madry2017towards}
Aleksander Madry, Aleksandar Makelov, Ludwig Schmidt, Dimitris Tsipras, and Adrian Vladu.
\newblock Towards deep learning models resistant to adversarial attacks.
\newblock {\em arXiv preprint arXiv:1706.06083}, 2017.

\bibitem[NZ20]{neu2020fast}
Gergely Neu and Nikita Zhivotovskiy.
\newblock Fast rates for online prediction with abstention, 2020.

\bibitem[PDDZ18]{pang2018towards}
Tianyu Pang, Chao Du, Yinpeng Dong, and Jun Zhu.
\newblock Towards robust detection of adversarial examples.
\newblock In {\em Advances in Neural Information Processing Systems}, pages 4579--4589, 2018.

\bibitem[RS88]{RS88b}
Ronald~L. Rivest and Robert~H. Sloan.
\newblock Learning complicated concepts reliably and usefully {(Extended} abstract).
\newblock In Tom Mitchell and Reid Smith, editors, {\em Proceedings AAAI-88}, pages 635--640. AAAI, 1988.

\bibitem[RS13a]{rakhlin2013online}
Alexander Rakhlin and Karthik Sridharan.
\newblock On semi-probabilistic universal prediction.
\newblock In {\em 2013 IEEE Information Theory Workshop (ITW)}, pages 993--1019. PMLR, IEEE, September 2013.

\bibitem[RS13b]{PredictableSequences}
Alexander Rakhlin and Karthik Sridharan.
\newblock Optimization, learning, and games with predictable sequences.
\newblock {\em Adv. Neur. In.}, 26, 2013.

\bibitem[RST11]{rakhlin2011online}
Alexander Rakhlin, Karthik Sridharan, and Ambuj Tewari.
\newblock Online learning: {Stochastic} and constrained adversaries.
\newblock {\em arXiv preprint arXiv:1104.5070}, 2011.

\bibitem[RV22]{rubinfeld2022testing}
Ronitt Rubinfeld and Arsen Vasilyan.
\newblock Testing distributional assumptions of learning algorithms.
\newblock {\em arXiv preprint arXiv:2204.07196}, 2022.

\bibitem[She78]{shelah:78}
Saharon Shelah.
\newblock {\em Classification Theory and the Number of Non-isomorphic Models}.
\newblock North Holland, 1978.

\bibitem[SHN{\etalchar{+}}18]{shafahi2018poison}
Ali Shafahi, W~Ronny Huang, Mahyar Najibi, Octavian Suciu, Christoph Studer, Tudor Dumitras, and Tom Goldstein.
\newblock Poison frogs! targeted clean-label poisoning attacks on neural networks.
\newblock In {\em Advances in Neural Information Processing Systems}, pages 6103--6113, 2018.

\bibitem[SZB10]{sayedi2010trading}
Amin Sayedi, Morteza Zadimoghaddam, and Avrim Blum.
\newblock Trading off mistakes and don't-know predictions.
\newblock In {\em Advances in Neural Information Processing Systems}, pages 2092--2100, 2010.

\bibitem[SZS{\etalchar{+}}13]{szegedy2013intriguing}
Christian Szegedy, Wojciech Zaremba, Ilya Sutskever, Joan Bruna, Dumitru Erhan, Ian Goodfellow, and Rob Fergus.
\newblock Intriguing properties of neural networks.
\newblock {\em arXiv preprint arXiv:1312.6199}, 2013.

\bibitem[Val85]{valiant1985learning}
Leslie~G Valiant.
\newblock Learning disjunctions of conjunctions.
\newblock In {\em Proceedings of the 9th International Joint Conference on Artificial intelligence}, pages 560--566, 1985.

\bibitem[WEY15]{el2010foundations}
Yair Wiener and Ran El-Yaniv.
\newblock Agnostic pointwise-competitive selective classification.
\newblock {\em J. Artif. Intell. Res.}, 52(5):171--201, January 2015.

\bibitem[WHEY15]{hanneke:15a}
Yair Wiener, Steve Hanneke, and Ran El-Yaniv.
\newblock A compression technique for analyzing disagreement-based active learning.
\newblock {\em J. Mach. Learn. Res.}, 16(4):713--745, 2015.

\bibitem[WK18]{wong2018provable}
Eric Wong and Zico Kolter.
\newblock Provable defenses against adversarial examples via the convex outer adversarial polytope.
\newblock In {\em International conference on machine learning}, pages 5286--5295. PMLR, 2018.

\bibitem[ZC16]{zhang2016extended}
Chicheng Zhang and Kamalika Chaudhuri.
\newblock The extended littlestone{\textquoteright}s dimension for learning with mistakes and abstentions.
\newblock In {\em Conference on Learning Theory}, pages 1584--1616. PMLR, 2016.

\end{thebibliography}

\end{document}